\documentclass[11pt,twocolumn]{article}

\usepackage[margin=1in]{geometry}
\usepackage{amsmath,amssymb,amsthm,mathtools}
\usepackage{booktabs}
\usepackage{multirow}
\usepackage{graphicx}
\usepackage{hyperref}
\usepackage{enumitem}
\usepackage{bm}
\usepackage[expansion=false]{microtype}
\usepackage{natbib}             
\usepackage{array}
\usepackage{xspace}
\usepackage{tikz}
\usetikzlibrary{arrows.meta, positioning, shapes.geometric}
\hypersetup{
    colorlinks=true,
    linkcolor=blue,
    citecolor=blue,
    urlcolor=blue
}
\newtheorem{theorem}{Theorem}[section]
\newtheorem{proposition}[theorem]{Proposition}

\theoremstyle{definition}

\theoremstyle{remark}

\newcommand{\E}{\mathbb{E}}

\newcommand{\KL}{\mathrm{KL}}
\newcommand{\JS}{\mathrm{JS}}

\newcommand{\Deff}{D_{\mathrm{eff}}}
\newcommand{\Sexc}{S_{\mathrm{exc}}}
\newcommand{\deltap}{\Delta p_1}
\newcommand{\deltastar}{\Delta p_1^*}
\newcommand{\lstruct}{\lambda_{\mathrm{struct}}}
\newcommand{\qbox}[1]{%
  \begin{center}
  \fbox{\parbox{0.82\columnwidth}{\centering #1}}
  \end{center}
}
\newcommand{\llocal}{\lambda_{\mathrm{local}}}
\numberwithin{equation}{section}

\makeatletter
\renewcommand{\@maketitle}{%
  \newpage
  \null
  \vskip 0.5em
  \begin{center}%
  \let \footnote \thanks
    {\LARGE \@title \par}%
    \vskip 1em%
    {\large
      \lineskip .5em%
      \begin{tabular}[t]{c}%
        \@author
      \end{tabular}\par}%
  \end{center}%
  \par
  \vskip 0.5em}
\makeatother

\title{
Paragraph Boundaries Are Not White Space:
\\[0.4em]
\large
Compression Depth as the Signature of Hierarchical Structure
}

\author{Shuyang Xiang}
\date{}

\begin{document}

\maketitle

\begin{abstract}
Standard positional encodings treat position as a one-dimensional
reading-order coordinate, but reading order alone does not determine
hierarchical textual structure. We use a hierarchical rotary positional
encoding (hRoPE) that represents paragraph, sentence, and token indices
as separate channels, hold the token sequence fixed, intervene on the
paragraph coordinate $p_1$, and measure cross-paragraph attention with a
token-distance-exact estimator. Attention is compressed in every corpus,
but compression alone is not diagnostic of true structure: an
architecturally identical channel with density-matched random labels is
compressed too. What distinguishes real structure is the depth of
compression: the real-versus-random depth gap is resolvable in two of
three corpora and not in the third, and real-structure depth
varies more strongly across corpora than the control's. Comparing eight corpus-only quantities across three
constructs (lexical persistence, paragraph length, embedding-based
coherence), none fully reproduces the cross-corpus ordering of depth,
though embedding-based coherence comes closest. At the paragraph level
that relation transfers as a common slope, but with the opposite
sign to the corpus-level ranking: controlling for length, diversity, and
position, paragraphs with more similar neighbors compress less deeply,
with no detectable slope difference between any pair of corpora, while
length, lexical-diversity, and position effects remain corpus-specific. Compression depth,
not its location, is the reproducible signature of genuine paragraph
structure in our setting.
\end{abstract}

\section{Introduction}
\label{sec:intro}

Start by considering two sequences containing exactly the same sentences:
\begin{quote}
Sentence A. Sentence B. Sentence C.\\[6pt]
\hfill\emph{versus}\\[6pt]
Sentence A.\\[2pt]
Sentence B.\\[2pt]
Sentence C.
\end{quote}
Apart from an explicit paragraph separator---a blank-line (newline) token in the second rendering---inserted before Sentences B and C, the two sequences contain the same tokens in the same order, at the same reading-order distances. Yet a human reader does not process them the same way---a paragraph boundary can induce a pause, mark a change of local context, or alter how earlier material relates to what follows. Hence a simple question:

\qbox{Is attention compressed near a paragraph boundary, and if so, is the depth of that compression greater under genuine paragraph structure than under a matched random control?}

More precisely: does a paragraph boundary change how the model relates tokens across it, beyond what their reading-order separation predicts, and is that compression a causal effect of hierarchical position rather than of information density? This is narrower than asking whether language models ``understand'' paragraphs: we ask whether hierarchical structure changes the effective interaction distance between tokens, and by how much.

Standard positional encodings represent position through a one-dimensional reading-order coordinate: the relationship between tokens at $i$ and $j$ is derived from $i$ and $j$, or from $i-j$. But text is nested, not linear ($\text{document} \supset \text{paragraph} \supset \text{sentence} \supset \text{token}$): two tokens with the same reading-order separation can be in the same sentence, in different sentences of one paragraph, or in different paragraphs---cases reading order alone cannot distinguish.

Changing only the paragraph label of ``You'' in ``This is what I love. You are the reason''---with no token changed---produces a measurable drop in attention across the boundary, and the effect vanishes in control models without a paragraph-position channel (Figure~\ref{fig:love-demo}, Appendix~\ref{app:demo}). This leads to a first, conceptual question: \textbf{Is reading-order position sufficient to represent hierarchical textual position?} We show that it is not: an $n$-token sequence admits $2^{n-1}$ paragraph segmentations, so one reading order corresponds to exponentially many hierarchical structures---reading order is exact for the linear sequence but incomplete for the hierarchy. This does not itself imply that a Transformer uses hierarchical position, only that reading order may not be a sufficient statistic for arbitrary paragraph structure. We therefore construct a hierarchical positional representation with separate paragraph, sentence, and token coordinates, enabling counterfactual interventions on paragraph position with the token sequence held fixed. It does compress attention---but not exclusively: an architecturally identical channel with density-matched, per-step-resampled random labels is compressed too, so compression alone is not evidence of true structure. What distinguishes real paragraph structure is \textbf{depth}: real structure compresses more deeply than the control in most corpora, and this depth varies more strongly across corpora than the control's. Whether corpus properties, rather than the model, can account for that corpus-specific depth is the question we turn to last.

\begin{figure*}[t]
\centering
\includegraphics[width=\textwidth]{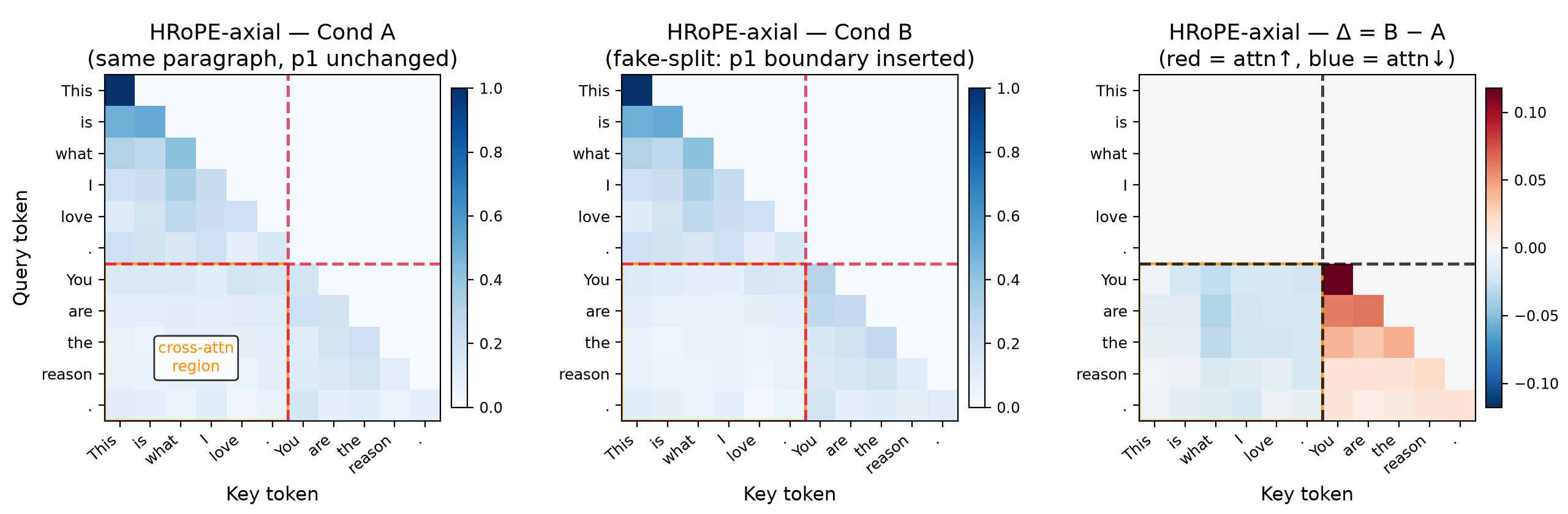}
\caption{Changing only the paragraph coordinate $p_1$ of ``You'', with the
token sequence identical, changes attention: \textbf{hrope\_axial} maps for
the original (Cond A) and fake-split (Cond B) conditions and their
difference. Controls and $\Deff$ values: Figure~\ref{fig:love-demo-full}.}
\label{fig:love-demo}
\end{figure*}

This paper proceeds in three layers. Section~\ref{sec:layer1} isolates the causal role of the paragraph coordinate, ruling out token-content artifacts; Section~\ref{sec:layer2} establishes the central claim, that compression is deeper under real structure than under a matched random control and is corpus-dependent; Section~\ref{sec:layer3} compares corpus-only candidate quantities with the observed ordering of depth; Section~\ref{sec:discussion} states the limitations, and Section~\ref{sec:conclusion} concludes.

\paragraph{Contributions} Our contributions are threefold:
\begin{itemize}
\item \textbf{Precondition}: reading order is not a sufficient coordinate for hierarchical structure, and intervening on $p_1$ with tokens held fixed causally changes attention in all three corpora (Section~\ref{sec:layer1}).
\item \textbf{Characterization}: attention is compressed near paragraph boundaries in every corpus, but compression alone is not diagnostic---depth, not location, separates real from random structure (Section~\ref{sec:layer2}).
\item \textbf{Comparison with corpus-only candidates}: no corpus-only quantity reproduces the cross-corpus ordering of depth, though embedding-based coherence comes closest, and at the paragraph level the coherence slope---opposite in sign to the corpus-level ranking---is common across corpora (Section~\ref{sec:layer3}).
\end{itemize}

\section{Related Work}
\label{sec:related}

\subsection{Positional encodings beyond reading order}

Conventional positional encodings tell Transformers about token order: RoPE \citep{su2021roformer} encodes relative displacement through rotations, and ALiBi \citep{press2022train} adds position-dependent attention biases; both operate on a linear, scalar position. A separate line encodes structure other than reading order: \citet{wang2019self} encode a token's position within a sentence's dependency tree rather than its linear index, while LieRE \citep{ostmeier2025liere} generalizes RoPE from fixed 2D rotations to learned dense rotations parameterized by Lie algebra generators. Our multi-axis hRoPE also encodes more than a scalar position, but assigns each hierarchical coordinate an independent, commuting block of channels, so that we can isolate and intervene on one coordinate ($p_1$) at a time rather than maximize capacity. Unlike all of the above, we ask whether a Transformer responds causally to an explicit higher-level coordinate, and whether that response has a coherent shape.

\subsection{Hierarchical architectures and long-document modeling}
Several architectures introduce hierarchical representations for long
documents: HIBERT \citep{zhang2019hibert} and related hierarchical encoders build separate sentence-document Transformer stacks rather than modifying the positional encoding. PermGen \citep{yu2021sentence} combines a sentence-level (global) index with a within-sentence (local) index, motivated by sentence-order permutation in generation rather than by testing whether the coordinate causally affects attention. HDT \citep{he2024hdt} assigns each token a per-level linear index across a token/sentence/section hierarchy and sparsifies attention along it, while Longformer \citep{beltagy2020longformer} sparsifies attention without an explicit hierarchical coordinate. Our construction is close to PermGen and HDT, but we do not use the coordinate to improve generation or efficiency; we intervene on it with all tokens fixed, and ask whether the resulting attention response has a coherent, corpus-predictable shape.

\subsection{Geometric framings and causal intervention methodology}

Independent of positional encoding design, recent work asks whether Transformer computation admits a geometric or physical framework. \citet{cirrincione2026geometry} characterizes what an information-optimal positional encoding must satisfy, showing that the geometry induced by positional token distributions is generally curved and cannot be exactly Euclidean---complementary to our model-derived, causally verified characterization. \citet{disipio2025curved} propose a general-relativistic analogy in which query-key interactions induce an effective metric on representation space; they study the evolution of token representations across layers, whereas we study how attention weight depends on an explicit hierarchical coordinate, via direct counterfactual intervention. Our fake-$p_1$ intervention is in the tradition of causal mediation analysis and activation patching \citep{vig2020causal}, which manipulate a model's internal state or input to estimate a component's causal contribution; ours instead swaps only an explicit positional coordinate with the token sequence entirely fixed. Unlike comparisons based on perplexity or downstream performance, we falsify the positional variable directly and ask whether the resulting sensitivity is anticipated by corpus statistics alone.

\section{Experimental Framework}
\label{sec:framework}

This section defines the hierarchical coordinate system, the hRoPE architecture, the model variants (four primary ones plus the \textbf{period\_axial} mirror control, Section~\ref{sec:hrope}), and the causal-intervention and bootstrap methodology used throughout.

\subsection{Hierarchical coordinate system and hRoPE}
\label{sec:hrope}

We introduce a hierarchical coordinate system $(p_1,p_2,p_3)$: $p_1$ is the paragraph index, $p_2$ the sentence index within the paragraph, and $p_3$ the token index within the sentence, so $(3,4,5)$ is the fifth token of the fourth sentence of the third paragraph. Neither reading order nor the coordinate can be recovered from the other alone, so rather than compressing them into a single scalar, hRoPE assigns each level its own rotary channel group:
\[
R_{\rm hier}(p_1,p_2,p_3) = R_1(p_1) \oplus R_2(p_2) \oplus R_3(p_3).
\]
where each $R_k$ is a rotary transformation acting on its own channel subspace. This axial construction allows us to manipulate the paragraph coordinate $p_1$ independently of token identities and lower-level coordinates, which is essential for the counterfactual interventions below.

Standard RoPE~\citep{su2021roformer} encodes a scalar position $m$ by rotating each channel pair through angle $m\theta_k$, so the attention score depends only on relative displacement; in hRoPE this property holds independently within each channel, and $p_1$ is never combined with $p_2$ or $p_3$ into a shared value. Each group's frequency base is calibrated independently from its corpus distribution, with no learned parameters (Appendix~\ref{app:training}).

We train four positional variants, matched in parameter count, depth, training steps, and seeds: \textbf{flat} (standard RoPE, no hierarchical coordinate); \textbf{sent\_axial} (groups for $p_2,p_3$ only); \textbf{hrope\_axial} (full three-group construction); and \textbf{rand\_axial} (identical, but $p_1$ is a resampled label $\rho(i)$ matched to boundary density rather than the true position), which separates having an additional coordinate from that coordinate carrying true positional information. A fifth model, \textbf{period\_axial}, is a mirror control whose $p_1$ is the mechanical grid $\lfloor t/L\rfloor$; it enters only the loss-based substitution check of Section~\ref{sec:p1substitution}, and appears neither in any depth ($U^*$) comparison nor in the validation-loss comparison of Appendix~\ref{app:valcost} (Appendix~\ref{app:training}).

\subsection{Counterfactual interventions and distance-residualized attention}

For each document we compare the original hierarchical assignment with two counterfactuals, \emph{fake-merge} and \emph{fake-split}. In \emph{fake-merge}, two genuine paragraphs receive the same paragraph coordinate; in \emph{fake-split}, an artificial boundary is introduced within a paragraph. In both cases the tokens and reading order are unchanged and only $p_1$ is manipulated (Figure~\ref{fig:interventions}, Appendix~\ref{app:interventions}).

To measure attention beyond reading-order distance, we average over layers and heads, $\bar A(i,j) = \frac{1}{LH}\sum_{l=1}^{L}\sum_{h=1}^{H} A_{l,h}(i,j)$.
After this averaging, the leading dependence of the RoPE attention score on token distance is exponential decay; taking logs makes this leading trend linear, which we fit and then subtract, so that the residual
\begin{align}
\log \bar A(i,j) &= \hat\beta_0 + \hat\beta_1 d(i,j) + \epsilon_{ij},\label{eq:fit}\\
\Deff(i,j) &= \log\bar A(i,j) - \hat\beta_0 - \hat\beta_1 d(i,j),\label{eq:deff}
\end{align}
captures the part of the attention pattern not explained by token distance alone: the fit removes the dominant, monotone, distance-driven component. This is only a first-order removal, so any residual, nonlinear dependence on $d$ surviving the linear fit is a known limitation (Appendix~\ref{app:pooling}).

\subsection{Normalized boundary effect and bootstrap protocol}
\label{beta-protocol}
Let $\mathcal{M}$ denote a manipulated condition. With two unweighted anchors (defined in Appendix~\ref{app:stat_methods}) we define
\begin{equation}
\beta = \frac{\overline{\Deff(\mathcal{M})} - \mathrm{anchor}_{\rm within}}{\mathrm{anchor}_{\rm para} - \mathrm{anchor}_{\rm within}},\label{beta}
\end{equation}
which measures the manipulated condition's level relative to the two natural baselines: $\beta=0$ is the within-paragraph baseline and $\beta=1$ the average real paragraph-boundary effect. No distance weighting is applied here, since the $d$-weighted pooling of Section~\ref{sec:exactd} is a separate, later step (Appendix~\ref{app:stat_methods}).
 
The two interventions start from opposite baselines: fake-merge acts on pairs that spanned a genuine boundary, so its unmanipulated level is the real-boundary level ($\beta=1$), while fake-split acts on within-paragraph sentence-boundary pairs, whose unmanipulated level $\beta_A^{\rm sent}$ is small and positive for WikiText-2 and OpenWebText ($\approx+0.2$ each; $\beta=0$ is the idealized limit) but \emph{not} small for Code ($\approx-1.85$). With $\beta_{\rm observed}$ the value of $\beta$ under the manipulated condition, the intervention effect is
\begin{equation}\label{eq:delta}
\Delta = \beta_{\rm observed} - \beta_{\rm unmanipulated},
\end{equation}
where $\beta_{\rm unmanipulated}=1$ for fake-merge and $\beta_A^{\rm sent}$ for fake-split; normalizing $\Delta_C$ against each corpus's own $\beta_A^{\rm sent}$ removes the cross-corpus asymmetry, so $\Delta_B$ is expected negative (attention drops from the real-boundary level toward or below the within-paragraph level) and $\Delta_C$ positive. For fake-split we additionally report the normalized response $\beta_C$, directly comparable to the $\beta=1$ calibration point, and the percentile rank $C_{\rm pct}$ of the fake-split $\Deff$ within the real boundary-effect distribution (Appendix~\ref{app:stat_methods}).
 
These quantities support two tests: the null of no intervention effect, $H_0^{(1)}:\Delta_B=\Delta_C=0$, satisfied by definition for \textbf{flat} and \textbf{sent\_axial}; and, for models with an active $p_1$ channel, $H_0^{(2)}:\beta_C=1$ for fake-split, whose claim is one of calibrated magnitude against the $\beta=1$ point, whereas fake-merge's is direction and significance, captured by $\Delta_B$ (Appendix~\ref{app:stat_methods}).

We intervene only on $p_1$, holding $p_2$ and $p_3$ fixed, and not at the sentence level, where $\deltap$ and $d$ are harder to disentangle (Section~\ref{sec:exactd}).

\subsection{Data and training}

We evaluate on three structurally distinct corpora: WikiText-2~\citep{merity2016pointer}, OpenWebText~\citep{gokaslan2019openwebtext}, and a collection of Python source files (Code). All models share a fixed architecture---8 layers, 8 heads, $d_{\text{model}}=512$, context length 1024---trained for 5000 steps with three seeds per corpus. Corpus statistics and preprocessing are in Appendix~\ref{app:training}, which also lists the training hyperparameters (Table~\ref{tab:hyper}).

The operative unit throughout is the \emph{paragraph}, defined corpus-specifically: a delimited block of prose in WikiText-2 and OpenWebText, a blank-line-delimited source block in Code. Because these differ in length and internal structure---a heterogeneity the Layer III quantities inherit---we report each quantity with its extraction rule (Appendix~\ref{app:training}).
\section{Layer I: Causal Effect of Hierarchical Position}
\label{sec:layer1}

This section establishes the precondition introduced in the Introduction: before characterizing the response shape, we confirm that hierarchical position affects attention at all. Using the interventions, metric, and bootstrap protocol of Section~\ref{sec:framework} (Appendix~\ref{app:stat_methods}), we ask directly whether attention changes when the token sequence is fixed and paragraph position is not.

\subsection{Single-example demonstration}
\label{sec:workede}

For the single example of Figure~\ref{fig:love-demo}, ``\textit{This is what I love. You are the reason.}'', fake-split (Cond.\ B) gives the second sentence its own $p_1$ with the token sequence unchanged: \textbf{hrope\_axial}'s attention from ``You'' back to the first sentence drops and $\Deff$ across the boundary falls by $3.105$ (a raw change, not the normalized $\Delta$ of Eq.~\eqref{eq:delta}), while the $p_1$-blind controls are unchanged (Figure~\ref{fig:love-demo-full}). This is an illustrative $n=1$ example, not a statistical result.

\subsection{Intervention effects in all three corpora}
\label{sec:workede-main}

Table~\ref{tab:layer1} summarizes the corpus-level results: for each corpus, the fake-merge effect ($\Delta_B$), the fake-split effect ($\Delta_C$), the normalized fake-split response relative to real boundaries ($\beta_C$), and its percentile rank within the real boundary distribution ($C_{\rm pct}$); all use three seeds per corpus (Section~\ref{sec:framework}).

\begin{table*}[!htbp]
\centering
\small
\renewcommand{\arraystretch}{0.9}
\caption{Causal effect of hierarchical paragraph position, per corpus;
quantities defined in Section~\ref{beta-protocol}, brackets are paired
document-cluster bootstrap 95\% CIs, three seeds pooled per corpus
(Appendix~\ref{app:stat_methods}).}
\label{tab:layer1}
\setlength{\tabcolsep}{4pt}
\begin{tabular}{lcccc}
\toprule
Corpus
& $\Delta_B$
& $\Delta_C$
& $\beta_C$
& $C_{\rm pct}$\\
\midrule
WikiText-2
& $-1.300$ ($-1.395,-1.211$)
& $+0.841$ ($0.723,0.964$)
& $1.060$ ($0.985,1.142$)
& $57.9\%$ ($51.7,65.7$)\\
OpenWebText
& $-1.302$ ($-1.372,-1.240$)
& $+0.914$ ($0.835,0.997$)
& $1.069$ ($1.019,1.122$)
& $58.9\%$ ($54.9,62.6$)\\
Code
& $-3.769$ ($-4.526,-3.234$)
& $+3.373$ ($2.800,4.148$)
& $1.522$ ($1.312,1.797$)
& $62.4\%$ ($58.3,66.7$)\\
\bottomrule
\end{tabular}
\end{table*}

The two interventions produce opposite effects: merging two paragraphs compresses attention ($\Delta_B<0$), while splitting dilates it ($\Delta_C>0$).

The compression does not merely stop at the within-paragraph baseline: since $\Delta_B$ is measured against the real-boundary level, the merged response $\beta_{\rm observed}=1+\Delta_B$ overshoots $\beta=0$ in every corpus---mildly for WikiText-2 ($-0.300$) and OpenWebText ($-0.302$), strongly for Code ($-2.769$), whose sentence-boundary level already sits far from the within-paragraph level ($\beta_A^{\rm sent}\approx-1.85$). Since the token sequence is fixed, none of these changes can be attributed to lexical content. The normalized response $\beta_C$ calibrates this directly---values near $1$ mean the fake-split effect is comparable to the average real boundary---and ranges from $1.06$ to $1.52$ across corpora; its 95\% bootstrap interval excludes $\beta_C=1$ for Code ($[1.31,1.80]$), where the fake-split response is significantly \emph{stronger} than the average real boundary, and marginally for OpenWebText ($[1.02,1.12]$), while for WikiText-2 it includes $1$ ($[0.99,1.14]$). The percentile rank $C_{\rm pct}$ places the fake-split response near the middle of the real boundary distribution ($58$--$62\%$), so the manipulation is quantitatively consistent with natural paragraph structure. The $\beta$ and $\Delta$ scales are anchor-normalized and not comparable to the raw depth $U^*$ (Appendix~\ref{app:stat_methods}).

\paragraph{Sanity check.} For \textbf{flat} and \textbf{sent\_axial}, $H_0^{(1)}$ holds by construction: with no $p_1$ channel, falsifying paragraph position changes no input, so $\Delta_B=\Delta_C=0$ exactly. Whether a channel's \emph{content}, rather than its mere presence, drives the response is tested directly in Section~\ref{sec:p1substitution}.

\subsection{Does the channel's content matter, or only its presence?}
\label{sec:p1substitution}

The claim above leaves open a concern: is \textbf{hrope\_axial}'s sensitivity to $p_1$ merely mechanical---the model following whatever coordinate occupies the channel regardless of content? If so, ``hierarchical position is a causal factor'' would reduce to ``the model tracks an arbitrary architectural knob,'' not a claim about what the trained weights represent.

We test this by inference only on the already-trained checkpoints: for \textbf{hrope\_axial}, we replace each input's $p_1$ with one of four substitutes, leaving $p_2$, $p_3$, and the tokens untouched, and measure validation loss (Table~\ref{tab:p1sub}): \textbf{real}, the true paragraph coordinate; \textbf{period}, a mechanical grid $p_1:=\lfloor t/L\rfloor$ at the same density as the true segmentation ($L=$132/102/64 tokens for WikiText-2/Code/OpenWebText); \textbf{rand}, a density-matched random relabeling of $p_1$; and \textbf{const0}, $p_1$ collapsed to a constant, which removes the paragraph axis while retaining $p_2$/$p_3$. The \emph{substitute} \textbf{period} reuses the mechanical grid that defines the trained \textbf{period\_axial} mirror control, which itself is not run through this protocol (Section~\ref{sec:hrope}; Appendix~\ref{app:training}).

\begin{table*}[!htbp]
\centering
\small
\renewcommand{\arraystretch}{0.9}
\caption{Validation loss under $p_1$ substitution, trained \textbf{hrope\_axial} checkpoint, inference only (mean over three seeds). Lower is better.}
\label{tab:p1sub}
\begin{tabular}{lcccc}
\toprule
Corpus & real & period & rand & const0 \\
\midrule
Code & 2.383 & 2.480 & 2.641 & 2.720 \\
WikiText-2 & 5.411 & 5.479 & 5.562 & 5.758 \\
OpenWebText & 5.622 & 5.683 & 5.720 & 5.922 \\
\bottomrule
\end{tabular}
\end{table*}

The ordering $\mathrm{real}<\mathrm{period}<\mathrm{rand}<\mathrm{const0}$ holds in all three seeds of every corpus, and each adjacent comparison has non-overlapping seed ranges with consistent sign across corpora. With three seeds this is directional evidence, not a formal significance test: the minimum one-sided sign-test $p$-value at $n=3$ is $0.125$ (Appendix~\ref{app:seedinference}). Two things follow. First, \textbf{const0} is the worst condition in every corpus by a wide margin ($0.30$--$0.35$ nats worse than real). If sensitivity to $p_1$ were purely mechanical---indifferent to content, requiring only the channel's presence---collapsing $p_1$ to a constant should cost no more than replacing it with any other coordinate; it instead costs markedly more than all three substitutes that retain a coordinate in that slot. Second, \textbf{real} beats \textbf{period} and \textbf{period} beats \textbf{rand} in every seed: period and rand both provide a full, density-matched coordinate at every position, and only their content differs (mechanical periodicity vs.\ true paragraph structure vs.\ density-matched noise). The trained weights discriminate on content, not occupancy.

As a further check, substituting the true paragraph coordinate into the \textbf{period\_axial} checkpoint raises loss only slightly ($+0.015$ to $+0.040$ nats; Appendix~\ref{app:valcost}), far less than the cost \textbf{hrope\_axial} incurs when moved off its own home to period ($+0.061$ to $+0.097$ nats; Table~\ref{tab:p1sub}): a model trained on a purely mechanical coordinate does not become strongly sensitive to true paragraph structure it never saw.

This establishes the paper's first claim---hierarchical paragraph position is a causal factor in attention---without claiming that every Transformer represents paragraphs explicitly, and at little language-modeling cost: never more validation loss than an architecturally identical channel carrying no true positional information, and at most $\approx0.5\%$ above \textbf{flat} (Appendix~\ref{app:valcost}). The real-versus-rand gap in Table~\ref{tab:p1sub} is largest in Code and smallest in OpenWebText ($0.26$, $0.15$, $0.10$ nats for Code, WikiText-2, OpenWebText), and a complementary logistic-regression probe on the residual stream reproduces the same Code~$>$~WikiText-2~$>$~OpenWebText gradient (Appendix~\ref{sec:probing}).
\section{Layer II: Compression Depth as the Corpus-Sensitive Signature}
\label{sec:layer2}

This section answers the Introduction's boxed question, characterizing how strongly attention is compressed near a paragraph boundary and whether that depth exceeds a matched random control; Section~\ref{sec:layer3} then asks whether that depth is explained by corpus statistics.
\subsection{Exact-distance estimation of the compression response}
\label{sec:exactd}

The existence of a causal effect (Section~\ref{sec:layer1}) raises a new question: how strongly does attention depend on paragraph displacement? Throughout, as in the framework (Section~\ref{beta-protocol}), we analyze paragraph-scale displacement only; the protocol does not extend to the sentence level, and neither does any claim below. We write $\deltap=|p_1(i)-p_1(j)|$ and regress the distance-residualized attention $\Deff$ (Eq.~\ref{eq:deff}) on $\deltap$ while controlling for token distance $d$: for each $d$ we contrast each $\deltap$ against the same-$d$ baseline, pool across $d$ by pair-count weighting, and retain only $(d,\deltap)$ cells with at least $2{,}000$ token pairs.\footnote{An earlier, binned version of this estimator introduced a distance-mismatch bias severe enough to flip WikiText-2's sign; Appendix~\ref{app:pooling} details the artifact and this exact-distance fix.}

Let $U(\deltap)$ denote the resulting response curve; we use \emph{compression} for $U(\deltap)<0$ and \emph{dilation} for $U(\deltap)>0$, write $\deltastar$ for the displacement at which an identifiable interior minimum occurs, and $U^*=U(\deltastar)$ for the corresponding depth. Because each token distance $d$ contributes its own weighted estimate, estimating the curve requires no separate binning choice; the remaining modeling choice is the upper range of $d$, fixed at $d\le1{,}024$ to match the training block size (Appendix~\ref{app:pooling}).

Figure~\ref{fig:potential-wells} shows each corpus's compression depth $U^*$ (the value of the fitted degree-2 curve at its vertex) and, where identifiable, the displacement $\deltastar$ at which it occurs. In all three corpora the fitted degree-2 curve has an interior vertex at the reference gate, so $U^*$ is defined as that vertex's value; for WikiText-2, however, the vertex \emph{location} is structurally non-identifiable, so we make no turnover claim for it, though its depth is stable. We therefore treat depth $U^*$, not its location, as the paper's primary quantity, reporting the point estimate with its paired document-cluster bootstrap interval (Table~\ref{tab:equilibrium}, Appendix~\ref{app:wells}; Appendix~\ref{app:binsweep}).

\begin{figure*}[!tbp]
\centering
\includegraphics[width=\textwidth]{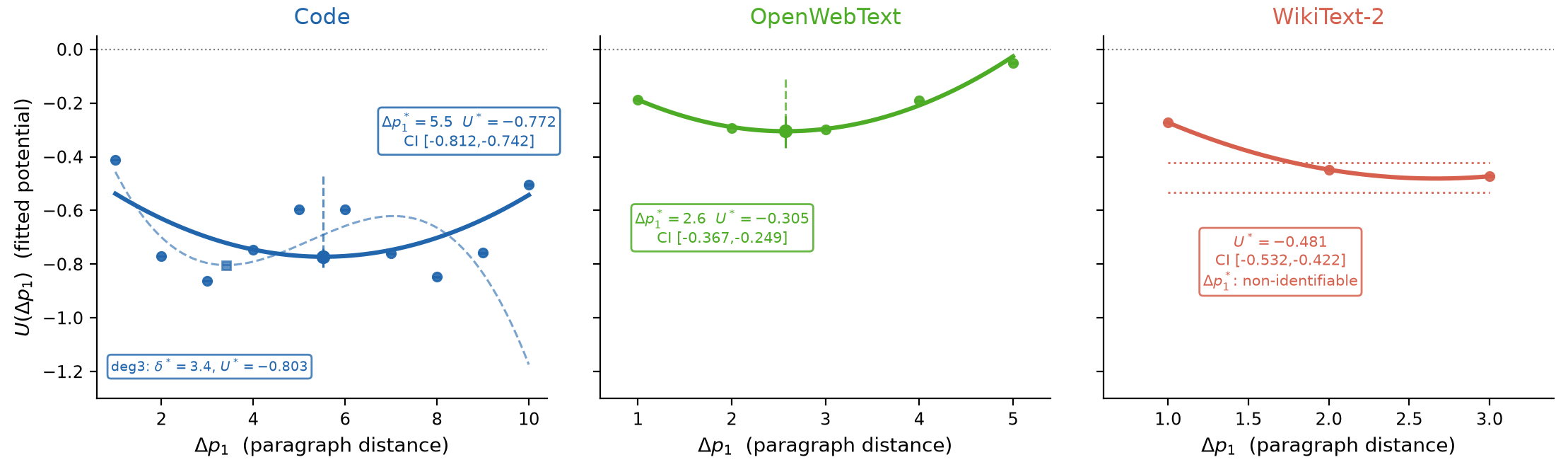}
\caption{Compression response $U(\deltap)$ for \textbf{hrope\_axial}
(exact-distance estimator, $n\ge2{,}000$, three seeds pooled). All three
corpora compress ($U<0$); depth $U^*$ with 95\% CI is annotated per corpus,
and $\deltastar$ is not identifiable for WikiText-2. Numbers:
Table~\ref{tab:equilibrium}.}
\label{fig:potential-wells}
\end{figure*}

All three corpora show compression, and none dilation, anywhere in the sampled range (Figure~\ref{fig:potential-wells}, Appendix~\ref{app:wells}). Compression depth $U^*$ is bootstrap-stable in every corpus---with one caveat, that WikiText-2's depth shifts by $\approx0.1$ across sampling gates, comparable to a single gate's CI width---whereas its displacement $\deltastar$ is not reliably identifiable at our sample sizes: structurally so for WikiText-2, whose long paragraphs (median $\approx122$ tokens) leave too few boundaries per document for the same-$d$ baseline cells to pass the $n\ge2{,}000$ gate beyond $\deltap\approx3$, and fit-sensitively for Code, whose degree-2 ($\approx5.5$) and degree-3 ($\approx3.4$) fits disagree (Appendix~\ref{app:binsweep}). We treat the resulting cross-corpus ordering of $U^*$ (deep to shallow: Code, WikiText-2, OpenWebText) as descriptive rather than inferential (Section~\ref{sec:cross-corpus}).

\subsection{The depth of compression is not exclusive to true paragraph structure}
\label{sec:randcontrol}

The compression above could in principle be a signature of genuine paragraph structure. We test this with \textbf{rand\_axial}, identical to hrope\_axial except that its $p_1$ channel carries a density-matched label $\rho(i)$ resampled independently at every training step, never fixed for a document. Because the label changes every step, the model cannot memorize any position as a boundary; any systematic response can only reflect a generic reaction to a density-matched axial coordinate, not to paragraph content.

Under the same exact-distance protocol, \textbf{rand\_axial} is compressed in every corpus too (Table~\ref{tab:randcontrol}), so compression alone is not diagnostic of true hierarchical structure; what distinguishes the two conditions is depth. We test this with a paired document-cluster bootstrap (same resample indices for both models, controlling for corpus-level sampling variation).

\begin{table*}[!htbp]
\centering
\small
\renewcommand{\arraystretch}{0.9}
\caption{Compression depth for \textbf{rand\_axial} and \textbf{hrope\_axial} (exact-distance protocol, $n\ge2{,}000$), and the significance of their paired difference.}
\label{tab:randcontrol}
\begin{tabular}{lrrrl}
\toprule
Corpus & hrope $U^*$ & rand $U^*$ & $\Delta$(hrope$-$rand) 95\% CI & Significant? \\
\midrule
Code & $-0.772$ & $-0.238$ & $[-0.578,-0.498]$ & \textbf{yes}, hrope deeper \\
WikiText-2 & $-0.481$ & $-0.220$ & $[-0.341,-0.171]$ & \textbf{yes}, hrope deeper \\
OpenWebText & $-0.305$ & $-0.326$ & $[-0.059,+0.093]$ & no (CI includes 0) \\
\bottomrule
\end{tabular}
\end{table*}

In two of three corpora the paired bootstrap interval for the hrope\_axial--rand\_axial depth gap excludes zero. In OpenWebText the two are not resolvable at our seed count: rand\_axial's point estimate ($-0.326$) is marginally more negative than hrope\_axial's ($-0.305$), and the gap interval $[-0.059,+0.093]$ includes zero. We therefore treat the OpenWebText difference as evidence in neither direction on the depth axis, and do not attribute its apparent flattening to a mechanistic cause absent direct evidence: it is a corpus where the depth comparison is unresolvable at our seed count, not one where the causal effect disappears.

rand\_axial's depth also tracks the corpus less strongly than hrope\_axial's (its three depths span $-0.22$ to $-0.33$ versus $-0.31$ to $-0.77$), but the difference is one of \emph{degree}: all six model--corpus cells are compression, and only depth separates the conditions. Both depths are fitted at the interior vertex of the same degree-2 configuration, and we make no claim about rand\_axial's vertex \emph{location} or curve shape. The cross-corpus pattern is descriptive at three corpora, and OpenWebText's gap is unresolved at our seed count, so we cannot separate a true null from a power limitation (Section~\ref{sec:limits}).

\section{Layer III: Comparing Corpus-Only Candidates with the Observed Depth}
\label{sec:layer3}

The previous sections establish the paper's central empirical claim: the paragraph
coordinate causally affects attention, and the depth of the resulting
compression is greater under real structure than under a
density-matched random control, with the paired bootstrap interval
excluding zero in two of three corpora and the third not resolvable at
our seed count
(Section~\ref{sec:randcontrol}). This section takes up the follow-up question posed at the end of
the Introduction: rather than what
mechanistically \emph{causes} the corpus-dependent depth, we ask which
corpus-only candidate quantities, if any, reproduce its observed
ordering.

The question matters because models are trained on text that
is not strictly homogeneous across paragraph boundaries. Adjacent
paragraphs often remain related by topic, discourse, vocabulary, or
style, and this relatedness generally weakens with increasing
separation. If such statistical persistence has a
characteristic scale, it gives a natural corpus-level quantity with
which to compare compression depth $U^*$. Crucially, we compute it
without using any model representations or attention values.

\subsection{Paragraph-level lexical distributions}

For paragraph $p$ with token counts $c_p(w)$ we take the empirical lexical distribution $q_p(w) = c_p(w)/\sum_v c_p(v)$, build a within-document shuffled-order control that preserves the marginal distribution of paragraph content while destroying sequential order, and take the excess sequential dependence of the Jensen--Shannon similarity curve over that control:

\begin{equation}
\Sexc(\delta) = S_{\rm real}(\delta) - S_{\rm shuffle}(\delta),\label{eq:Dcorpus}
\end{equation}

This quantity is entirely corpus-derived. We denote it $\Sexc$, not a bare $D$, to avoid confusion with the attention-derived $\Deff$ (Eq.~\ref{eq:deff}); the two are unrelated except that both measure residual dependence after removing a baseline, and the similarity curve, divergence, and shuffled-order control are defined in full in Appendix~\ref{app:corpusscale}. We fit the observed decay by
\begin{equation}
\Sexc(\delta) = S_\infty + A\exp\left(-\frac{\delta}{\lstruct}\right),\label{eq:lambda}
\end{equation}
and interpret $\lstruct$ as a characteristic \emph{paragraph-level
structural persistence length}: a Jensen--Shannon, distributional measure
of paragraph-level coherence, not a mutual-information decay length. All
corpus-level estimates in this section use document-level bootstrap
resampling ($500/500$ replicates completed for the main analyses;
$\lstruct$ is summarized by its bootstrap median, because its point
estimate is unstable; Section~\ref{sec:layer3-main}), which
quantifies uncertainty from the available corpus sample but not
uncertainty in the model-derived $\deltastar$, treated here as an
independently obtained target (Section~\ref{sec:layer2};
Appendix~\ref{app:stat_methods}). Fitted values and the remaining estimators ($\delta_{1/2}$,
$\llocal$, $L_{\rm int}$, $\delta_{\rm zero}$) are in
Appendix~\ref{app:corpusscale}.

\subsection{Main comparison}
\label{sec:layer3-main}

Table~\ref{tab:corpus} (Appendix~\ref{app:corpusscale}) reports the resulting values against compression depth $U^*$ (Table~\ref{tab:equilibrium}), which, unlike $\deltastar$, is defined and bootstrap-stable in all three corpora: $\lstruct$ (bootstrap median) is 4.85 ([3.80,6.70]), 3.74 ([2.94,5.09]), and 3.85 ([2.49,8.19]) for Code, WikiText-2, and OpenWebText, against $U^*$ of $-0.772$, $-0.481$, and $-0.305$. Since $\lstruct$ is
corpus-only and independent of the model-fitting protocol, the comparison
covers all three corpora---and the orderings do not agree. Ranked by value, $\lstruct$ gives
WikiText-2 $<$ OpenWebText $<$ Code, whereas $U^*$ (more
negative $=$ deeper) is, deep to shallow, Code,
WikiText-2, OpenWebText: nominally the reverse order on the two prose corpora.
Code occupies the same rank (largest $\lstruct$, deepest compression) in
both, but WikiText-2 has the \emph{smallest} $\lstruct$ yet compresses
\emph{more deeply} than mid-ranked OpenWebText. A corpus-only predictor should get all three ranks right, not just the
endpoint; $\lstruct$ gets only Code right. The prose-corpus comparison is
also fragile (medians $0.11$ apart, overlapping intervals, and point
estimates $5.48/4.90/3.37$ that order the corpora differently), so
$\lstruct$ cannot separate WikiText-2 from OpenWebText. Defined independently of any trained model, it therefore does
not reproduce the ordering of the learned compression depth: a negative
finding.

\subsection{Further candidates: at best partial}
\label{sec:alt-scales}
\label{sec:coherence-candidates}

We compare eight corpus-only quantities---decomposing into three
constructs rather than eight independent hypotheses: \emph{lexical
persistence} (five variants built from the same decay curve underlying
$\lstruct$), \emph{paragraph length}, and \emph{embedding-based
coherence} (two summaries of one coherence curve)---against the ordering
of compression depth $U^*$ (deep to shallow: Code $-0.772$, WikiText-2
$-0.481$, OpenWebText $-0.305$; a descriptive ordering of fixed corpora,
Section~\ref{sec:cross-corpus}). Each quantity is defined and estimated
in Appendix~\ref{app:alt-scales}, and Table~\ref{tab:allcandidates}
reports all eight values grouped by construct, each candidate's
cross-corpus differences judged against its own variance estimate where
available.

\begin{table*}[!htbp]
\centering
\footnotesize
\renewcommand{\arraystretch}{0.9}
\caption{Eight corpus-only quantities by construct, against the ordering
of compression depth $U^*$ (deep to shallow: Code, WikiText-2,
OpenWebText). ``Sep.\ pairs'': pairwise corpus comparisons separated at
$p<0.05$ under that candidate's own variance estimate, of three; the
reference row is descriptive (Section~\ref{sec:cross-corpus}).}
\label{tab:allcandidates}
\setlength{\tabcolsep}{4pt}
\begin{tabular}{lrrrcc}
\toprule
Candidate & Code & WikiText-2 & OpenWebText & Ordering matches? & Sep.\ pairs \\
\midrule
$U^*$ (reference) & $-0.772$ & $-0.481$ & $-0.305$ & --- & \emph{descriptive} \\
\midrule
\emph{Lexical persistence (five variants):} $\lstruct$ & 4.85 & 3.74 & 3.85 & No & n/a \\
$\delta_{1/2}$ (rescaling of $\lstruct$) & 3.36 & 2.59 & 2.67 & No & n/a \\
$\llocal$ & 2.22 & 2.16 & 1.38 & Yes & 0/3 \\
$L_{\rm int}$ & 3.84 & 3.02 & 3.30 & No & n/a \\
$\delta_{\rm zero}$ (negative control) & 5.17 & 6.93 & 3.72 & No & n/a \\
\midrule
\emph{Paragraph length:} median (tokens) & 55 & 122 & 45 & No & n/a \\
\midrule
\emph{Embedding coherence:} aggregate & 0.096 & 0.066 & 0.051 & \textbf{Yes} & \textbf{2/3} \\
Coherence half-life (paragraphs) & 2.53 & 2.47 & 2.18 & Yes & 0/3 \\
\bottomrule
\end{tabular}
\end{table*}

\paragraph{Verdicts.} Lexical persistence fails: $\delta_{1/2}$ shares
$\lstruct$'s ordering by construction and is not a second data point,
$\lstruct$, $L_{\rm int}$, and $\delta_{\rm zero}$ each swap a corpus
pair, and $\llocal$'s ordering matches but separates no pair even under
its own bootstrap uncertainty. Paragraph length also fails, swapping Code
and WikiText-2. Embedding coherence comes closest: its two summaries
agree directionally but differ in strength, with aggregate coherence
separating Code from both other corpora yet not WikiText-2 from
OpenWebText, and coherence half-life reproducing the direction with wider
intervals (e.g.\ Code $[1.95,3.18]$) that separate no pairwise difference;
a vocabulary-robustness check (Appendix~\ref{app:vocab-robust}, Table~\ref{tab:vocab}) confirms
that $\lstruct$'s ordering is unchanged at every tested cutoff, so this
failure is a mismatch between $\lstruct$ and $U^*$ rather than an artifact
of one vocabulary size.
Counting constructs rather than table rows, no corpus-only structural
scale among these three fully reproduces the ordering of compression
depth, and embedding coherence is the candidate we would recommend for
follow-up. With only three corpora, any cross-corpus ordering---of $U^*$
or of a candidate explanation---is anecdotal in the statistical sense
(Section~\ref{sec:limits}).

\subsection{Moving the analysis down to the paragraph level}
\label{sec:parareg}

The candidate comparison of Section~\ref{sec:alt-scales} is corpus-level by
construction, so we re-ask it one level down: within a
corpus, is a paragraph's own compression depth $U_p$ a function of its own
observable properties? This trades three corpus rows for nearly three
thousand paragraph observations and, more importantly, for a falsifiable
form---a universal relation should reproduce as a per-paragraph slope that
\emph{agrees} across corpora, a corpus- or volume-driven one should leave
those slopes \emph{disagreeing}.

\textbf{Per-paragraph depth and predictors.} We fix the shallowest
cross-paragraph displacement, $\deltap{=}1$, and score each boundary
pair's log-attention relative to a within-paragraph baseline matched on
both exact token distance $d$ and query-context band $b(i)$; averaging
over the $\deltap{=}1$ pairs incident to a paragraph gives its depth
$U_p$ (Eqs.~\ref{eq:upair}--\ref{eq:up}), with the same sign convention as
the corpus-level $U^*$: negative $U_p$ is compression, so depth is the
magnitude of a negative value. For each paragraph we record its token
length, its type--token ratio, its embedding coherence with the adjacent
paragraphs of the same document (mean all-MiniLM-L6-v2 cosine,
corpus-derived and seed-invariant), and its fractional position
$pos/N_p$, and fit a weighted least squares model of $U_p$ on these four
within each corpus, weighted by the paragraph's incident
$\deltap{=}1$ pair count (Appendix~\ref{app:parareg}).

\begin{table*}[!htbp]
\centering
\small
\renewcommand{\arraystretch}{0.9}
\caption{Paragraph-level compression regression of $U_p$
(Eq.~\ref{eq:up}), weighted least squares within each corpus. Left:
standardized coefficients ($^{*}$\,$p<0.05$, $^{**}$\,$p<0.01$,
$^{***}$\,$p<0.001$). Right: $|z|$ for pairwise differences of raw slopes
(C, W, O: Code, WikiText-2, OpenWebText); $|z|>2.87$ is significant after
Bonferroni correction over twelve comparisons (Appendix~\ref{app:parareg}).}
\label{tab:parareg}
\begin{tabular}{lcccccc}
\toprule
 & \multicolumn{3}{c}{$\beta_{\rm std}$, per corpus}
 & \multicolumn{3}{c}{slope difference $|z|$}\\
\cmidrule(lr){2-4}\cmidrule(lr){5-7}
Predictor & Code & Wiki & OpenWebText & C--W & C--O & W--O\\
\midrule
Length & $-0.044^{***}$ & $-0.016^{**}$ & $+0.005$ & 2.97 & 7.44 & 3.39\\
Type--token ratio & $+0.052^{***}$ & $-0.007$ & $-0.004$ & 4.09 & 5.38 & 0.44\\
Embedding coherence & $+0.015^{**}$ & $+0.013^{**}$ & $+0.017^{***}$ & 0.03 & 0.93 & 0.95\\
Position (control) & $-0.012^{*}$ & $-0.123^{***}$ & $-0.099^{***}$ & 17.91 & 14.19 & 6.37\\
\midrule
$N$ paragraphs & 1{,}167 & 438 & 1{,}285 & & &\\
Weighted $R^2$ & 0.287 & 0.693 & 0.442 & & &\\
\bottomrule
\end{tabular}
\end{table*}

\textbf{Results.}
Of the four predictors, exactly one transfers: the embedding-coherence
slope is positive and significant within every corpus (standardized
Code $+0.015$, WikiText-2 $+0.013$, OpenWebText $+0.017$; all $p<0.01$),
and its raw slopes ($0.079$, $0.078$, $0.109$) show no detectable
difference between any pair of corpora ($|z|\le0.95$, $p>0.3$ for all
three pairs; Table~\ref{tab:parareg}). Its direction, however, runs
opposite to the corpus-level ranking of Table~\ref{tab:allcandidates}:
under the sign convention of Eq.~\ref{eq:up} a positive coefficient means
a \emph{smaller} depth magnitude, so---holding length, diversity, and
position fixed---paragraphs whose embedding neighbors are more similar
are compressed \emph{less} deeply, at a rate common to all three corpora.
What transfers is a conditional slope, holding the other three
predictors fixed, not a bivariate relation. Every other predictor is corpus-specific: each has
at least one pair of corpora whose slopes differ beyond the Bonferroni
threshold (position up to $|z|=17.9$, length up to $7.4$, type--token
ratio up to $5.4$), with Code most often the outlying corpus, and Code
also retains the lowest explained variance ($R^2=0.287$). We read this as
a refinement, not a rescue, of Section~\ref{sec:alt-scales}: no new
corpus-level scale is introduced, so the negative finding of
Section~\ref{sec:alt-scales} stands.

\textbf{Is the compression uniform or hub-driven?} An exploratory masking probe finds that the compression depth of Section~\ref{sec:exactd} tracks the attention mass removed rather than which tokens carry it, with a residual beyond this mass effect of similar size across corpora ($14$--$18$ percentage points of the well); it changes no main-text estimate (Appendix~\ref{app:hubmask}).
\section{Limitations}
\label{sec:discussion}
\label{sec:limits}

\paragraph{No corpus-only predictor explains the corpus-dependent depth.} None of the three constructs we tested reproduces the cross-corpus ordering of $U^*$ or predicts why the real-versus-random gap is resolvable in two corpora but not the third; this rules out the accounts tested, not the possibility that some other corpus-level property determines depth. The paragraph-level probe is likewise within-corpus and correlational---a per-paragraph slope of $U_p$ on paragraph properties, at $\deltap{=}1$ only---so it identifies no mechanism and supplies no corpus-level scale.

\paragraph{Location is not always a well-posed question.} $\deltastar$ is structurally non-identifiable for WikiText-2 and fit-sensitive for Code (Section~\ref{sec:exactd}), which is why depth, not location, is the paper's primary quantity.

\paragraph{Scope and measurement.} Three corpora limit any cross-corpus claim to the descriptive; paragraphs are the relevant unit only under a corpus-specific definition (Section~\ref{sec:framework}); ``geometry'' is used descriptively; the check of Appendix~\ref{app:two-process} excludes only the additive competing-process family; and all experiments use 8-layer, $d_{\rm model}=512$ models trained for 5000 steps, with no downstream-task metric reported. Two measurement limits bound the protocol: \textbf{period\_axial} (Section~\ref{sec:hrope}) makes $p_1$ an almost deterministic function of token distance, so a coordinate not decoupled from token distance cannot be analyzed under it at all, and three seeds bound every seed-level comparison to directional evidence (Appendix~\ref{app:extlimits}).

\paragraph{Cross-corpus ordering is descriptive.}\label{sec:cross-corpus}
Because the document-level bootstrap quantifies \emph{within-corpus} uncertainty and the three corpora are fixed by choice rather than drawn from a common population, the depth ordering (deep to shallow: Code, WikiText-2, OpenWebText) establishes that the corpora differ, not that the specific order generalizes; we avoid ``statistically significant'' language for cross-corpus comparisons.

\section{Conclusion}
\label{sec:conclusion}

Reading order cannot by itself determine hierarchical structure: the same token sequence admits exponentially many paragraph segmentations (Section~\ref{sec:intro}). Given an explicit hierarchical coordinate, the model uses it: attention is compressed near paragraph boundaries in every corpus tested, and this is a causal effect of hierarchical position, not an artifact of token content, since fixing the token sequence and changing only $p_1$ changes attention (Section~\ref{sec:layer1}).

Compression alone, however, is not evidence of genuine structure: an architecturally identical channel with density-matched random labels compresses too. Nor is the \emph{location} of deepest compression, which is not even identifiable in one of three corpora. What distinguishes real structure is \emph{depth}: real structure compresses more deeply where the comparison is resolvable (two of three corpora), and its depth differs across corpora (Section~\ref{sec:layer2}). Where an interior turning point is identifiable (Code, OpenWebText), it is not a trivial superposition of two monotone processes (Appendix~\ref{app:two-process}).

What sets this depth remains open. No corpus-only statistic we tested---across lexical persistence, paragraph length, and embedding-based coherence---reproduces its cross-corpus ordering, though embedding-based coherence comes closest; at the paragraph level, the within-corpus coherence slope is common across corpora but runs opposite to the corpus-level ranking, and no corpus-level scale is restored (Section~\ref{sec:layer3}). Reading-order distance is not structural distance; the depth of compression is the result a reader should take from this paper, and explaining it---including why the real-versus-random gap resolves in two corpora but not the third---is the open problem we leave (Section~\ref{sec:limits}).

\bibliography{refs}
\appendix
\onecolumn
\section{Statistical Methods}
\label{app:stat_methods}

\subsection{Anchor and $\beta$ estimation}
$\mathrm{anchor}_{\rm within}$ and $\mathrm{anchor}_{\rm para}$ are
computed from the unmanipulated portion of the same document sample used
for the fake-$p_1$ interventions, ensuring anchors and manipulated points
are drawn from a common distribution of documents within each bootstrap
resample.

\paragraph{Anchor-normalized responses are not comparable to raw
depth.} Because the $\beta$ and $\Delta$ scales of
Section~\ref{beta-protocol} are normalized to the interval between the
within-paragraph and real-boundary anchors, their magnitudes are not
directly comparable to the raw-scale depth $U^*$ of
Section~\ref{sec:exactd}: Code's fake-merge effect $\Delta_B$ is roughly three times
WikiText-2's ($-3.77$ vs.\ $-1.30$), whereas its
depth is only about $1.6\times$ larger ($-0.772$ vs.\ $-0.481$). The
former are anchor-normalized responses, the latter a raw attention
residual.

\paragraph{Normalized response, percentile rank, and the two tests.}
For fake-split we additionally report the normalized response
\begin{equation}
\beta_C := \beta_{\rm observed}\big|_{\mathcal M=\text{fake-split}},
\end{equation}
directly comparable to the $\beta=1$ calibration point: $\beta_C=1$
means the fake boundary produces exactly the average real-boundary
magnitude; and $C_{\rm pct}$, the percentile rank of the fake-split
$\Deff$ within the real boundary-effect distribution. These quantities
support two tests: the null of no intervention effect,
$H_0^{(1)}:\Delta_B=\Delta_C=0$, satisfied by definition for
\textbf{flat} and \textbf{sent\_axial}, since without a $p_1$ channel
falsifying paragraph position leaves $\Deff$ unchanged; and, for models
with an active $p_1$ channel, $H_0^{(2)}:\beta_C=1$ for fake-split. A
normalized response is reported only for fake-split, whose claim is one
of calibrated magnitude and needs the $\beta=1$ reference point;
fake-merge's claim is one of direction and significance, fully captured
by $\Delta_B$.

\subsection{Joint cluster bootstrap}
Each of $B=4000$ resamples draws documents with replacement (not
individual points), and recomputes $\overline{\Deff(\mathcal M)}$,
$\mathrm{anchor}_{\rm within}$, and $\mathrm{anchor}_{\rm para}$ from the
same resampled document set before forming $\beta^{(b)}$.

The signed effects reported in Table~\ref{tab:layer1} are second-order
quantities of the same bootstrap: $\Delta_B=\beta_B-1$ (since
$\beta_A^{\rm para}\equiv1$ in every resample, $\Delta_B$'s interval is
the $\beta_B$ interval shifted by $-1$) and
$\Delta_C=\beta_C-\beta_A^{\rm sent}$, whose interval conservatively
spans the paired combination of the $\beta_C$ and $\beta_A^{\rm sent}$
bootstrap bounds.

\subsection{Equilibrium point estimation}
The exact-distance response curve $U(\deltap)$ is estimated per token
distance $d$ ($d\le1{,}024$): for each $d$ individually, the contrast at
each paragraph displacement $\deltap$ is computed relative to the
same-$d$, same-paragraph ($\deltap=0$) baseline, retaining only
$(d,\deltap)$ cells with at least $n\ge2{,}000$ token pairs, and the
cell-level estimates are then combined across $d$ by pair-count
weighting (this exact-distance protocol is defined in
Appendix~\ref{app:pooling}). For each corpus we fit weighted degree-2 and
degree-3 polynomials to $U(\deltap)$, solve for interior stationary
points of each fitted curve, and report the degree-2 fit as the headline
and the degree-3 fit as a robustness check for the location parameter
(Code's $\deltastar$ differs between them). Confidence intervals use a
paired 2000-draw document-level cluster bootstrap, in which identical
resample indices are applied to the two conditions being compared (e.g.,
hrope\_axial vs.\ rand\_axial, original vs.\ shuffled); a corpus is
reported as having no interior equilibrium when the fitted curve is
monotonic on the sampled range.

\subsection{Corpus-only candidate quantities and their per-variant verdicts}
\label{app:alt-scales}

This appendix gives the definitions behind the eight corpus-only
quantities compared in Section~\ref{sec:alt-scales} and
Table~\ref{tab:allcandidates}, together with the per-variant verdicts
summarized there.

\paragraph{Definitions.} The set includes $\lstruct$ from
Section~\ref{sec:layer3-main} plus seven further quantities.
\emph{Lexical persistence} contributes five variants built from the
same decay curve $R(\delta)$ underlying $\lstruct$: the half-decay
point $\delta_{1/2}=\lstruct\ln2$ (an exact rescaling of $\lstruct$,
sharing its ordering by construction, not an independent statistic); a
local decay scale $\llocal$ fit from the first three sampled
displacements only; an integrated persistence
$L_{\rm int}=\sum_{\delta=1}^{10}R(\delta)$; and a negative control,
the zero-crossing scale $\delta_{\rm zero}$ (the displacement at which
paragraph pairs become statistically indistinguishable from
shuffled-order pairs). \emph{Paragraph length} contributes median
paragraph length. \emph{Embedding-based coherence} contributes two
summaries of one coherence curve: an aggregate score (mean
adjacent-paragraph cosine similarity minus a random-pair baseline) and
the curve's half-life in paragraphs. Each candidate's cross-corpus
differences are judged against its own variance estimate where
available; the reference row $U^*$ is descriptive
(Section~\ref{sec:cross-corpus}).

\paragraph{Per-variant verdicts.} Within lexical persistence,
$\delta_{1/2}$ shares $\lstruct$'s ordering by construction and is not a
second data point; of the remaining four, $\lstruct$, $L_{\rm int}$,
and $\delta_{\rm zero}$ do not match the observed ordering
($\lstruct$ and $L_{\rm int}$ swap WikiText-2 and OpenWebText,
$\delta_{\rm zero}$ swaps Code and WikiText-2), and $\llocal$'s
ordering matches but its cross-corpus differences are separated for no
pair even under its own within-corpus bootstrap uncertainty, with the
largest point contrast (Wiki vs.\ OpenWebText) only marginal.
Lexical persistence is therefore a single failed construct, not four or
five. Paragraph length also fails (it swaps Code and WikiText-2).
Within embedding coherence the two summaries agree directionally but
differ in strength: aggregate coherence separates Code from both other
corpora under its own within-corpus bootstrap but not WikiText-2 from
OpenWebText (a pair $U^*$ separates only descriptively,
Section~\ref{sec:layer3-main}), while coherence
half-life reproduces the direction with wider intervals (e.g.\ Code
$[1.95,3.18]$) that separate no pairwise difference. This is one
construct---embedding coherence---coming closer than the other two, with
aggregate coherence its more reliable summary, not two independent
successes.

\subsection{Corpus-only structural scale estimation}
\label{app:corpusscale}
\paragraph{The lexical similarity curve.} For paragraph $p$, let
$c_p(w)$ denote the count of token $w$ in that paragraph, so the
empirical lexical distribution is $q_p(w) = c_p(w)/\sum_v c_p(v)$. For
two such distributions $q_p$ and $q_{p+\delta}$ we form the midpoint
$m = \frac12(q_p+q_{p+\delta})$ and the Jensen--Shannon divergence
\[
\JS(q_p,q_{p+\delta}) = \frac12\KL(q_p\Vert m) + \frac12\KL(q_{p+\delta}\Vert m),
\]
which we normalize into a similarity score
$S(p,p+\delta) = 1-\JS(q_p,q_{p+\delta})/\log 2$ that is $1$ for
identical paragraph distributions and $0$ when they are maximally
different. Averaging over paragraph pairs gives the sequential
similarity curve
\[
S_{\rm real}(\delta) = \E_p[S(p,p+\delta)],
\]
and the shuffled-order control $S_{\rm shuffle}(\delta)$ is obtained by
randomly permuting paragraph order within each document before
recomputing the pair similarity at displacement $\delta$, holding the
marginal distribution of paragraph content fixed while destroying
sequential order. The excess sequential dependence of
Eq.~\eqref{eq:Dcorpus} is their difference
$\Sexc(\delta)=S_{\rm real}(\delta)-S_{\rm shuffle}(\delta)$, fitted by
the decay of Eq.~\eqref{eq:lambda} with characteristic scale
$\lstruct$.
All corpus-only statistics ($\lstruct$, $\delta_{1/2}$,
$\llocal$, $L_{\rm int}$, $\delta_{\rm zero}$) are computed from
$\Sexc(\delta)=S_{\rm real}(\delta)-S_{\rm shuffle}(\delta)$ and use the same
document-level cluster bootstrap protocol as the model-derived estimates.

\begin{table*}[!htbp]
\centering
\caption{
Corpus-only structural scale compared with the model-derived
compression depth $U^*$ (exact-distance protocol, Table~\ref{tab:equilibrium});
$\lstruct$ is the bootstrap median with its 95\% CI.
}
\label{tab:corpus}
\begin{tabular}{lrrr}
\toprule
Corpus
&
$U^*$
&
$\lstruct$
&
95\% CI
\\
\midrule
Code
& $-0.772$
& 4.85
& [3.80,6.70]
\\
WikiText-2
& $-0.481$
& 3.74
& [2.94,5.09]
\\
OpenWebText
& $-0.305$
& 3.85
& [2.49,8.19]
\\
\bottomrule
\end{tabular}
\end{table*}

\subsection{A distance-bin pooling artifact, and why we moved to exact-distance estimation}
\label{app:pooling}

Earlier versions of this analysis controlled for token distance $d$ by
pooling token pairs into a small number of bins ($d\le64$, $d\le128$,
$d\le256$), computing $\Deff$ relative to each bin's own $\deltap=0$
baseline, and combining bins by pair-count weighting. We identify here
a specific artifact in this procedure, severe enough to have reported
a spurious sign for one corpus, and describe the exact-distance
estimator we replaced it with (used throughout Section~\ref{sec:layer2} onward).

\paragraph{The artifact.} Within a bin, the mean token distance of the
$\deltap=k$ group of pairs need not match that of the $\deltap=0$
baseline group: because larger $\deltap$ pairs are, on average, found
at larger $d$ within a wide bin, the two groups being contrasted are
implicitly compared at different typical distances. Since $\Deff$
itself depends on $d$, this mismatch injects a bias into the pooled
contrast whose sign and size depend on how mismatched the two groups
happen to be. For WikiText-2 at $\deltap=1$, we traced this
concretely: every individual token distance $d\in\{1,\ldots,256\}$
shows compression ($U<0$) at $\deltap=1$, in every seed, with no
exception; yet the $d\le256$-pooled estimate showed \emph{dilation}
($U\approx+0.11$). The $\deltap=1$ group's pairs in that bin sit at a
mean $d\approx184.5$, versus $\approx167.0$ for the $\deltap=0$
baseline group; because attention itself varies with $d$ within the
bin, the two groups are contrasted at different typical distances and
the pooled contrast is biased, which the exact-distance estimator below
removes by never comparing pairs at different $d$. Code was largely unaffected by the same
procedure, because its two groups' $d$-distributions happened to be
well matched within the bin (a difference of $0.4$, versus WikiText-2's
$17.5$) --- the artifact is a property of how mismatched the groups
are, not of the pooling procedure applied uniformly across corpora.

\paragraph{Exact-distance estimation.} We replace bin pooling with an
estimator that never compares pairs at different $d$: for each token
distance $d$ individually, we compute the contrast at each $\deltap$
relative to the \emph{same-$d$} $\deltap=0$ baseline, retain only
$(d,\deltap)$ cells with at least $n=2{,}000$ token pairs, and combine
across $d$ by pair-count weighting. Within each $(d,\deltap)$ cell the
contrast is the simple, unweighted mean over that cell's token pairs.
We compared three ways of weighting these per-$d$ cell contrasts when
combining them across $d$ --- scheme A (pair-count, $w_d\propto n(d,\deltap)$),
scheme B (inverse-variance, $w_d\propto1/\mathrm{se}^2(d,\deltap)$), and
scheme C (uniform in $d$, $w_d\propto1$) --- which agree to within
$\approx0.03$ of $U^*$ at the reference gate in all three corpora; we
report scheme A throughout the main text (the A--C labels are local to
this appendix and are not used elsewhere). The $n\ge2{,}000$ setting is
the smallest threshold at which the spurious diagonal band is absent in
\emph{all three} corpora: at lower thresholds an apparent diagonal band
of anomalous cells appears in $(d,\deltap)$ heatmaps (at $n\ge200$, for
Code, $32$ spurious positive-$U$ cells, traced to the ``resolvability
frontier'' where only the sparsest, noisiest cells for a given
$\deltap$ survive the gate); raising the threshold to $n\ge1{,}000$
reduces this to $12$ cells, and $n\ge2{,}000$ eliminates the band
entirely ($0$ cells), while WikiText-2 shows no such band at any
threshold. The band is a cell-level heatmap artifact: its spurious
cells are the sparsest to survive the gate and receive correspondingly
little weight under pair-count pooling, so it does not move the pooled
depth --- Code's $U^*$ at $n\ge200$ ($-0.772$) already equals its
$n\ge2{,}000$ value, and this is why Table~\ref{tab:thresholds} can
report the $n\ge200$ depth for Code without contradiction.
All compression-depth estimates reported in the main text use this
$n\ge2{,}000$ protocol; Appendix~\ref{app:binsweep}
(Table~\ref{tab:thresholds}) additionally reports the same estimator at
other gates as a robustness check.

\paragraph{Why position, but not depth, remains sensitive to the sampling threshold.} Raising the $n$ threshold further to test convergence reveals a second, distinct issue specific to $\deltastar$'s \emph{location}: for WikiText-2, the fitted vertex $\deltastar$ drifts monotonically downward as the threshold rises ($4.11\to2.92\to2.70\to2.67\to2.51$ across $n\ge200,1{,}000,1{,}500,2{,}000,3{,}000$) and never plateaus, because the $\deltap=0$ baseline cells resolvable at higher thresholds are themselves confined to smaller $d$, which shifts which points the parabola is fit to. Compression depth $U^*$ moves far less: it shifts by $\approx0.1$ over the same range ($-0.577$ at $n\ge200$ to $-0.475$ at $n\ge3{,}000$; e.g., $[-0.532,-0.422]$ at $n\ge2{,}000$), an order of magnitude smaller in proportional terms than the location drift. This is why Section~\ref{sec:exactd} reports $\deltastar$ as structurally non-identifiable for WikiText-2 while still reporting $U^*$ as its primary, stable quantity.

Pooling distance bins beyond $d=1{,}024$ into the same fit remains unsound for a further reason, independent of the artifact above: at these longer ranges, paragraph displacement $\deltap$ and token distance $d$ become approximately collinear (mean $d$ increases monotonically and substantially with $\deltap$), which reintroduces exactly the token-distance confound the exact-distance protocol is designed to remove. We therefore report far-distance statistics only as a separate, non-fitted diagnostic (Appendix~\ref{app:binsweep}), never combined with the $d\le1{,}024$ estimates in Table~\ref{tab:equilibrium}.

\subsection{Robustness of compression depth and location to the sampling threshold}
\label{app:binsweep}

The exact-distance protocol (Section~\ref{sec:exactd}, Appendix~\ref{app:pooling}) requires choosing a minimum per-cell sample size $n$; we use $n\ge2{,}000$ throughout the main text, the smallest threshold at which the spurious diagonal band of Appendix~\ref{app:pooling} is absent in all three corpora. Table~\ref{tab:thresholds} reports compression depth $U^*$ across thresholds; here we describe how depth and location respond to that choice, and what happens beyond the $d\le1{,}024$ range used in the main analysis.

\begin{table*}[!htbp]
\centering
\footnotesize
\renewcommand{\arraystretch}{0.9}
\caption{Compression depth $U^*$ across the sampling gate $n$ (exact-distance estimator, same-$d$ cells, three seeds pooled per corpus, pair-count-weighted across $d$, window $d\le1{,}024$). All entries are re-pooled from the same stored cell counts and estimator as the main text (Section~\ref{sec:exactd}); bootstrapped 95\% CIs (Section~\ref{sec:exactd}) are reported only at the reference gate $n\ge2{,}000$. Within each corpus the depth is stable from $n\ge1{,}000$ onward (Code stays at $-0.77$, WikiText-2 moves $-0.501\to-0.481\to-0.475$, OpenWebText stays at $\approx-0.30$). The $n\ge200$ values are less reliable: Code is already converged ($-0.772$), WikiText-2's $-0.577$ is $0.1$ from its $n\ge1{,}000$ value, and OpenWebText's is omitted ($---$) because at $n\ge200$ neither of the weighting schemes of Appendix~\ref{app:pooling} produces a depth comparable to the rest of the table: under scheme B (inverse-variance) the fitted degree-2 curve has no interior well at this gate, and under scheme A (pair-count) it gives $-0.231$, an outlier far from the $n\ge1{,}000$--$3{,}000$ plateau ($\approx-0.30$).}
\label{tab:thresholds}
\begin{tabular}{lcccc}
\toprule
Corpus & $n\ge200$ & $n\ge1{,}000$ & $n\ge2{,}000$ & $n\ge3{,}000$ \\
\midrule
Code & $-0.772$ & $-0.773$ & $-0.772$ [$ -0.812,-0.742$] & $-0.772$ \\
WikiText-2 & $-0.577$ & $-0.501$ & $-0.481$ [$ -0.532,-0.422$] & $-0.475$ \\
OpenWebText & --- & $-0.295$ & $-0.305$ [$ -0.367,-0.249$] & $-0.310$ \\
\bottomrule
\end{tabular}
\end{table*}

All rows of Table~\ref{tab:thresholds} are re-pooled from the stored per-document cells with the main-text estimator (pair-count-weighted, $d\le1{,}024$), so every cell is a legitimate exact-distance estimate; the one cell marked ``---'' (OpenWebText at $n\ge200$) is omitted for the reason given in the caption. Between $n\ge1{,}000$ and $n\ge2{,}000$, Code's depth moves by $0.001$ ($-0.773\to-0.772$) and OpenWebText's by $0.01$ ($-0.295\to-0.305$); we do not claim an onset at $n\approx1{,}000$ for these two corpora. WikiText-2 differs: its depth $U^*$ moves monotonically from $-0.577$ at $n\ge200$ to $-0.475$ at $n\ge3{,}000$, a shift of $\approx0.1$ that no single gate's bootstrap CI would resolve (e.g.\ $[-0.532,-0.422]$ at $n\ge2{,}000$), whereas its fitted location $\deltastar$ drifts monotonically downward and never plateaus ($4.11\to2.92\to2.70\to2.67\to2.51$ across $n\ge200,1{,}000,1{,}500,2{,}000,3{,}000$), and beyond $n\approx3{,}400$ the fit becomes underdetermined (too few surviving $\deltap$ channels to fit a degree-2 polynomial at all). The cause is structural, not numerical: the $\deltap=0$ baseline cells surviving a higher threshold are increasingly confined to smaller $d$ (WikiText-2's longest paragraph is $465$ tokens, so $\deltap=0$ pairs cannot exist beyond that separation, and only $4{,}673$ such pairs in the entire corpus exceed $d=400$), which shifts the points the parabola is fit against whenever the threshold changes; there is no stable vertex to converge to within the range this corpus's paragraph lengths allow. Thus Table~\ref{tab:thresholds} reports compression depth $U^*$ --- the depth at the fitted vertex of the same degree-2 curve used throughout (pair-count-weighted, $d\le1{,}024$) --- as a stable, primary quantity for this corpus.

\paragraph{Far-distance range ($d>1{,}024$).} We do not extend the exact-distance protocol beyond $d=1{,}024$: paragraph displacement $\deltap$ and token distance $d$ become increasingly collinear at longer range, which would reintroduce the confound the exact-distance protocol is designed to remove (Appendix~\ref{app:pooling}). All estimates in this paper are therefore confined to $d\le1{,}024$, matching the training block size.

\paragraph{Why WikiText-2's location is harder to pin down than Code's or OpenWebText's.} A plausible qualitative account is paragraph length: WikiText-2's median paragraph is $122$ tokens, roughly $2$--$3\times$ Code's ($55$) or OpenWebText's ($45$) (Section~\ref{sec:coherence-candidates}), so within any fixed token-distance window fewer WikiText-2 boundaries occur at all, leaving fewer, sparser $\deltap=0$ baseline cells to anchor a fit at large $d$. We have not verified this against the drift pattern above, so we offer it as a hypothesis, not an established explanation.

\paragraph{Corpus alignment.} All hub-masking estimates (Appendix~\ref{app:hubmask}) use the same per-corpus document set as Table~\ref{tab:equilibrium}: 60 documents for WikiText-2, 100 each for Code and OpenWebText. An earlier version truncated Code and OpenWebText to 60 documents for cross-corpus alignment; this shift changes OpenWebText's original $U^*$ by $0.042$, a substantial fraction of its bootstrap interval width, and the truncation had no principled justification, so we treat it as a methodological error. Main-text numbers use the full 100-document sets.

\subsection{Seed-level inference at $n=3$}
\label{app:seedinference}

Several comparisons rest on three seeds per condition (the
$p_1$-substitution ordering of Section~\ref{sec:p1substitution},
the validation-loss comparison of Appendix~\ref{app:valcost}, and the
probe comparison of Section~\ref{sec:probing}). With three seeds,
the strongest \emph{achievable} evidence for a consistent sign is a
sign test over $n=3$: the minimum two-sided $p$-value is
$2\cdot(1/2)^3=0.25$ and the minimum one-sided $p$-value $0.125$. No
three-seed claim can reach conventional thresholds, so we report all
such comparisons as \emph{directional, seed-level} evidence rather
than hypothesis tests: ``consistent across seeds'' establishes only
that no seed contradicts the sign, the strongest statement three seeds
support.

\subsection{Layer-wise probing of paragraph-boundary decodability}
\label{sec:probing}

As a complementary, purely representational check (probing rather than
intervention), we ask at which layer, if any, a linear probe on the
residual stream predicts whether the current and previous token lie in
the same paragraph---a binary, relational target rather than $p_1$'s
absolute value. For \textbf{flat}, \textbf{rand\_axial}, and
\textbf{hrope\_axial} (three corpora, three seeds), we extract the
residual stream after each of the 8 layers and train a per-layer
logistic-regression probe, splitting train/test by document (no document
appears in both) to prevent the probe exploiting document-level surface
features. The positive class (paragraph boundary) is rare
($0.7\%$--$1.8\%$ of pairs across corpora), so we report balanced
accuracy, macro-F1, and AUROC rather than raw accuracy;
Table~\ref{tab:probe} gives balanced accuracy.

\begin{table*}[!htbp]
\centering
\caption{Balanced accuracy of the same-paragraph probe at layers 0, 4, and 7 (mean over three seeds).}
\label{tab:probe}
\begin{tabular}{lccccccccc}
\toprule
& \multicolumn{3}{c}{WikiText-2} & \multicolumn{3}{c}{Code} & \multicolumn{3}{c}{OpenWebText} \\
Model & L0 & L4 & L7 & L0 & L4 & L7 & L0 & L4 & L7 \\
\midrule
flat & 0.60 & 0.61 & 0.61 & 0.60 & 0.69 & 0.68 & 0.60 & 0.68 & 0.68 \\
rand\_axial & 0.59 & 0.75 & 0.82 & 0.65 & 0.77 & 0.80 & 0.64 & 0.89 & 0.91 \\
hrope\_axial & 0.60 & 0.78 & 0.85 & 0.65 & 0.93 & 0.95 & 0.64 & 0.88 & 0.89 \\
\bottomrule
\end{tabular}
\end{table*}

Two patterns emerge. First, \textbf{flat}'s probe accuracy rises only
modestly with depth (to $0.61$--$0.68$) and plateaus below either
axial variant in every corpus: some paragraph-boundary information is
weakly present without a dedicated $p_1$ channel, but a channel
(real or random) makes it far more linearly accessible. Second,
\textbf{hrope\_axial}'s advantage over \textbf{rand\_axial} is real but
corpus-dependent, which we checked at the individual-seed level rather
than means alone: in \textbf{Code}, hrope\_axial exceeds
rand\_axial at every seed from layer 2 on (directional at $n=3$;
Appendix~\ref{app:seedinference}), growing to $+0.11$ to
$+0.21$ balanced accuracy by layer 7---a robust representational gap.
In \textbf{WikiText-2}, the gap is positive in most mid-layers but its
sign is \emph{not} consistent across seeds by the final layer (one seed
$+0.098$, one $+0.001$, one $-0.020$), so at that layer the
apparent advantage is not distinguishable from seed noise. In
\textbf{OpenWebText}, the sign flips between seeds at nearly every
layer, leaving the two not reliably distinguishable. This mirrors the
same Code~$>$~WikiText-2~$>$~OpenWebText gradient
found in Section~\ref{sec:p1substitution} via causal intervention rather
than probing---two methods agreeing that OpenWebText is the
corpus where true and architecturally matched but uninformative
positional signals are hardest to tell apart.

\subsection{Paragraph-level compression regression}
\label{app:parareg}
\paragraph{Common-slope check.} The right block of
Table~\ref{tab:parareg} compares slopes between each pair of corpora.
Because the three per-corpus regressions are fitted on disjoint data,
their slope estimates are independent, and the difference of two raw
slopes has standard error $\sqrt{\mathrm{se}_1^2+\mathrm{se}_2^2}$; we
report $|z|$ and use a Bonferroni threshold over the twelve comparisons
(four predictors $\times$ three pairs), $|z|>2.87$ for family-wise
$0.05$.

This appendix gives the estimator, predictor set, data, protocol,
numbers, and robustness for the paragraph-level regression summarized in
Section~\ref{sec:parareg}.

\paragraph{Per-paragraph compression depth.}
We fix the shallowest cross-paragraph displacement, $\deltap{=}1$, and
define each boundary pair's contribution as its log-attention relative to
a matched within-paragraph baseline,
\begin{equation}
D_{{\rm eff},1}(i,j)
=
\log\bar A(i,j)
-
\mathrm{base}\bigl(d(i,j), b(i)\bigr),
\label{eq:upair}
\end{equation}
where $\mathrm{base}(d,b)$ is the mean $\log\bar A$ over
same-paragraph ($\deltap{=}0$) pairs pooled across the three seeds at
the same exact token distance $d$ and the same query context band
$b(i)=\lfloor\log_2(i+2)\rfloor$, with $i$ the query's absolute
position.
The second matching axis is necessary: within fixed $d$, $\log\bar A$
also decays with $i$ simply because the softmax context widens---over
$\deltap{=}0$ pairs at $d\in[120,230]$ the mean $\log\bar A$ falls from
$-4.86$ at band $6$ to $-6.37$ at band $9$---and a $d$-only baseline
would mechanically misattribute that decay to paragraph position.
Averaging $D_{{\rm eff},1}$ over the cross-paragraph pairs in which a
paragraph $p$ appears as an endpoint (a pair between adjacent
paragraphs is counted once per endpoint),
\begin{equation}
U_p
=
\frac{1}{|N(p)|}\sum_{(i,j)\in N(p)} D_{{\rm eff},1}(i,j),
\label{eq:up}
\end{equation}
with $N(p)$ the $\deltap{=}1$ pairs incident to $p$.
Negative $U_p$ is compression: $U_p$ carries the same sign convention as
the corpus-level $U^*$ of Section~\ref{sec:exactd} and
Appendix~\ref{app:binsweep}, so the depth ordering is read off the
\emph{magnitude} of a negative $U_p$ (equivalently, we may report $-U_p$ as a
positive depth). All $U_p$ values in Section~\ref{sec:parareg} and in
this appendix are reported in this signed convention.
$U_p$ is the paragraph-level analogue of the displacement-one point of
the corpus curve $U(\deltap{=}1)$, not the vertex depth $U^*$
(Section~\ref{sec:exactd}); only Code's well sits near $\deltap{=}1$.

\paragraph{Predictors.}
For each paragraph we record its token length; its type--token ratio, a
lexical-diversity proxy; its embedding coherence with the adjacent
paragraphs of the same document (mean all-MiniLM-L6-v2 cosine,
corpus-derived and seed-invariant, the same embeddings as
Section~\ref{sec:coherence-candidates}); and its fractional position in
the document, $pos/N_p$, included as a control because of the band
effect above.
Within each corpus we fit a weighted least squares model of $U_p$ on
these four predictors, weighted by each paragraph's number of
underlying pairs (1{,}167 Code, 438 WikiText-2, 1{,}285 OpenWebText
paragraphs).
Table~\ref{tab:parareg} reports standardized coefficients
$\beta_{\rm std}=b\cdot{\rm sd}_{\rm corpus}(x)$, so effect sizes are
comparable across predictors and corpora; the weights, standard errors,
and heterogeneity tests are given below.

\paragraph{Data, pooling, and estimation.}
All estimates use the same per-corpus document set as
Table~\ref{tab:equilibrium} (60 documents for WikiText-2, 100 for Code,
100 for OpenWebText; Appendix~\ref{app:binsweep}), running fresh forward
passes of the pooled three-seed \textbf{hrope\_axial} checkpoints through
the 1024-token training window.
Same-paragraph ($\deltap=0$) baseline cells and $\deltap=1$ pair cells are
accumulated per $(d,\mathrm{band})$, pooled across seeds; a baseline cell
is used only if it contains at least $400$ token pairs across the three
seeds ($899$, $959$, and $848$ usable $(d,\mathrm{band})$ cells for Code,
WikiText-2, and OpenWebText), and a paragraph is dropped if its document
was truncated by the window, if it lacks a $\deltap=1$ neighbor, or if
every incident pair fails the baseline gate.
This leaves 1{,}167 Code, 438 WikiText-2, and 1{,}285 OpenWebText
paragraphs.
Token distance is capped at $d\le256$ for the paragraph probe: the
model is trained in 1024-token blocks, but at the finer
$(d,\mathrm{band})$ granularity the same-paragraph baseline cells thin
out at larger $d$; the corpus-level protocol
(Section~\ref{sec:exactd}) instead pools over $d$ and reaches
$d\le1024$.
Within each corpus, $U_p$ is regressed on raw length,
type--token ratio, embedding coherence, and position by weighted least
squares with weights $|N(p)|$, the number of $\deltap=1$ pairs incident
to the paragraph (means $22{,}162$ for Code, $75{,}802$ for
WikiText-2, $19{,}802$ for OpenWebText).
Standard errors are the classical (homoskedastic) WLS standard errors,
with the residual variance estimated from the weighted residuals, and
significance is from $t$ on $N-5$ degrees of freedom.

\paragraph{Why the query-context band is a control, not a predictor.}
Section~\ref{sec:parareg} gives the mechanism and the band-decay
diagnostic; here we note the consequence for the regression.
A baseline matching only $d$ treats the absolute-position decay, which
is independent of paragraph structure, as cross-paragraph signal: in
pilot runs on a subset of WikiText-2 this left an overwhelming,
spurious position term (univariate $r\approx-0.96$, $R^2\approx0.9$).
Matching the baseline on $(d,b)$ removes most of it, leaving the
position control with standardized coefficients $-0.012$, $-0.123$,
$-0.099$ in Code, WikiText-2, OpenWebText, which we report as a
residual control rather than an effect.
The residual is corpus-dependent (pairwise slope differences up to $|z|=17.9$; Table~\ref{tab:parareg}), consistent
with the artifact's magnitude tracking how attention is distributed
over the context in each corpus.

\paragraph{Implied raw slopes.}
Per-corpus raw slopes (Table~\ref{tab:parareg-raw}; classical WLS
standard errors in parentheses):

\begin{table*}[!htbp]
\centering
\small
\renewcommand{\arraystretch}{0.9}
\caption{Per-corpus raw slopes of the paragraph-level regression of $U_p$ (Eq.~\ref{eq:up}); standard errors in parentheses. Signs follow that equation: negative $U_p$ is compression, so a positive coherence slope denotes a \emph{smaller} depth magnitude, not deeper compression. Standardized equivalents (slope $\times$ per-corpus predictor SD; the values reported in Table~\ref{tab:parareg}) are, for Code/WikiText-2/OpenWebText: length $-0.044/-0.016/+0.005$; type--token $+0.052/-0.007/-0.004$; coherence $+0.015/+0.013/+0.017$; position $-0.012/-0.123/-0.099$.}
\label{tab:parareg-raw}
\begin{tabular}{lrrrr}
\toprule
& Length & Type--token & Coherence & Position\\
\midrule
Code & $-0.00053\,(0.00005)$ & $+0.239\,(0.031)$ & $+0.079\,(0.025)$ & $-0.042\,(0.018)$\\
WikiText-2 & $-0.00025\,(0.00008)$ & $-0.068\,(0.069)$ & $+0.078\,(0.025)$ & $-0.481\,(0.016)$\\
OpenWebText & $+0.00011\,(0.00007)$ & $-0.033\,(0.040)$ & $+0.109\,(0.021)$ & $-0.353\,(0.012)$\\
\bottomrule
\end{tabular}
\end{table*}

Per-corpus predictor standard deviations convert these to the
standardized entries of Table~\ref{tab:parareg}: e.g.\ Code's length
slope $-0.00053\times82.5=-0.044$.
The three coherence slopes ($0.079$, $0.078$, $0.109$) have heavily
overlapping intervals, and no pair differs ($|z|\le0.95$;
Table~\ref{tab:parareg}).

\paragraph{Coherence level versus slope.}
The universal relation in Table~\ref{tab:parareg} is the coherence
\emph{slope}; its corpus \emph{level} is not aligned with depth
(per-paragraph mean adjacent-embedding cosine is $0.514$ for
WikiText-2, higher than Code's $0.400$ and OpenWebText's $0.398$).
The corpus-level ``aggregate coherence'' of
Table~\ref{tab:allcandidates} is a different summary---the adjacency
cosine minus a random-pair baseline, a level rather than a
within-corpus slope---which is why the two statistics can disagree
without contradiction: the aggregate is corpus-conditioned by the
embedding recipe of Section~\ref{sec:coherence-candidates}, the
paragraph slope by the regression here, and we claim only the latter as
a common relation.
The two also disagree in \emph{sign}: the corpus-level aggregate ranks
with depth (Table~\ref{tab:allcandidates}), whereas the conditional
paragraph-level slope is positive in all three corpora, which under the
sign convention of Eq.~\ref{eq:up} means a \emph{smaller} depth magnitude
for more coherent paragraphs.
We report this opposition rather than reconcile it: the conditional
slope absorbs length, diversity, and position effects, so a positive
between-corpus coherence--depth association need not induce a positive
within-corpus partial slope.

\subsection{A single-example demonstration}
\label{app:demo}

Figure~\ref{fig:love-demo-full} gives the full illustrative single-example
demonstration referenced in Section~\ref{sec:intro} (Figure~\ref{fig:love-demo}) and
Section~\ref{sec:layer1}: attention maps for the original (Cond.\ A) and
manipulated (Cond.\ B) conditions on the token sequence \textit{This is
what I love. You are the reason.}, their difference, and the
corresponding $D_{\mathrm{eff}}$ values. Only $p_1$ changes; the token
sequence is identical across conditions. This is an $n=1$
illustration, not a statistical result.

\begin{figure*}[!htbp]
\centering
\includegraphics[width=\textwidth]{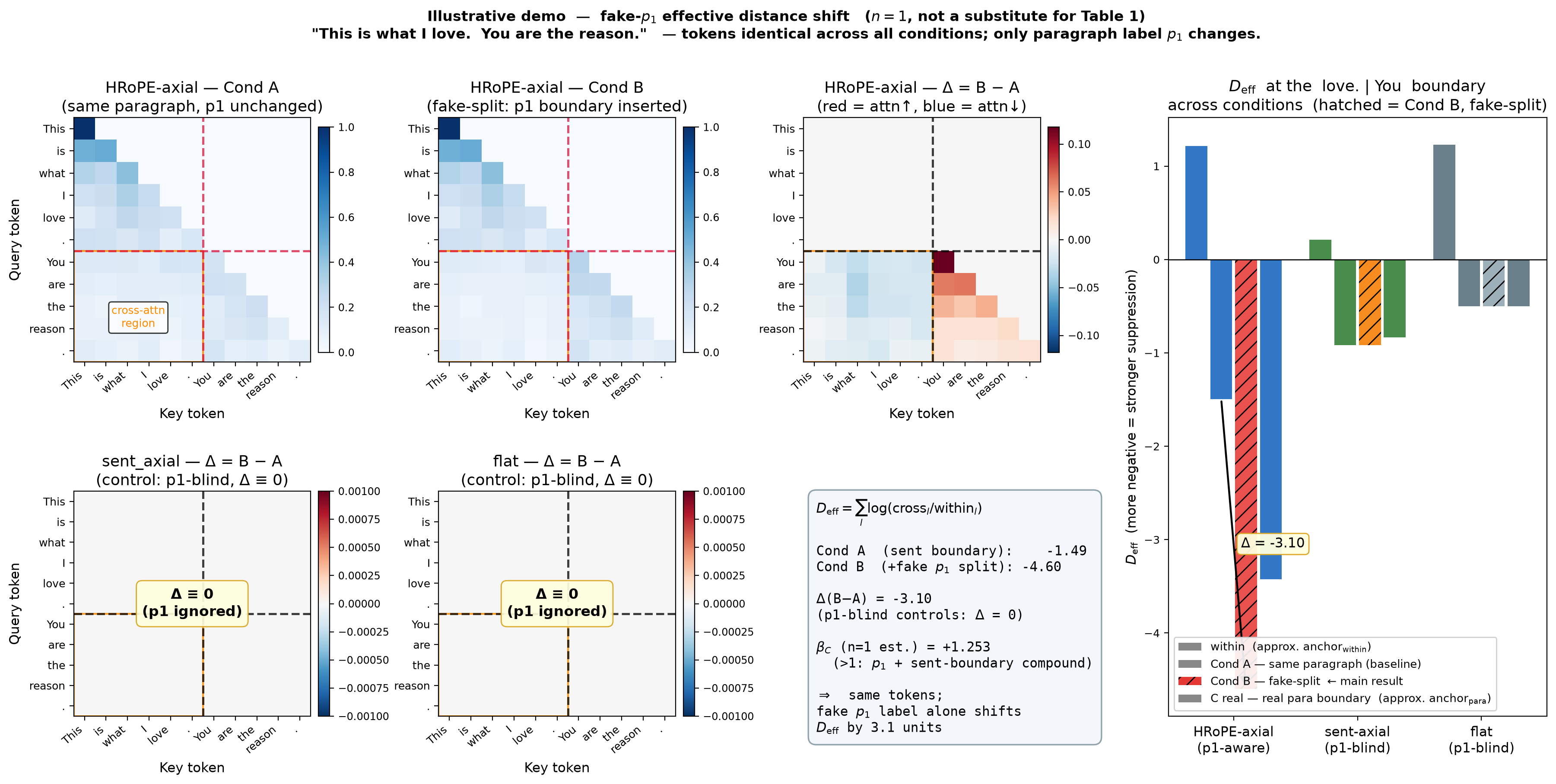}
\caption{Full single-example demonstration. \emph{Top}: \textbf{hrope\_axial}
attention maps for Cond A, Cond B, and their difference. \emph{Bottom}: the
same difference for the $p_1$-blind controls \textbf{sent\_axial} and
\textbf{flat}, identically zero. \emph{Right}: $\Deff$ at the ``love.~$|$~You''
boundary across conditions and models.}
\label{fig:love-demo-full}
\end{figure*}

\subsection{The compression response curves}
\label{app:wells}

Figure~\ref{fig:potential-wells} shows the exact-distance response curve
$U(\deltap)$ of Section~\ref{sec:exactd} for all three corpora, together
with the fitted degree-2 curve whose vertex defines the compression depth
$U^*$ of Table~\ref{tab:equilibrium}.

\begin{table*}[!htbp]
\centering
\small
\renewcommand{\arraystretch}{0.9}
\caption{Compression depth $U^*$ and, where identifiable, its
displacement $\deltastar$, under the exact-distance estimator
($n\ge2{,}000$ per cell, three seeds pooled per corpus, paired
document-cluster bootstrap, 2000 draws; Section~\ref{sec:exactd};
Appendix~\ref{app:binsweep}).}
\label{tab:equilibrium}
\begin{tabular}{lrrl}
\toprule
Corpus
&
Depth $U^*$ (95\% CI)
&
$\deltastar$
&
Position note
\\
\midrule
Code
& $-0.772$ ($[-0.812,-0.742]$)
& $\approx5.5$ (deg2) / $\approx3.4$ (deg3)
& identifiable, fit-sensitive
\\
WikiText-2
& $-0.481$ ($[-0.532,-0.422]$)
& n/a (see text)
& \textbf{not identifiable}
\\
OpenWebText
& $-0.305$ ($[-0.367,-0.249]$)
& $\approx2.6$
& identifiable
\\
\bottomrule
\end{tabular}
\end{table*}

\subsection{The two counterfactual interventions}
\label{app:interventions}

Figure~\ref{fig:interventions} illustrates the two counterfactual
interventions of Section~\ref{sec:framework} on a three-sentence
document: the original segmentation, an artificial merge in which two
adjacent paragraphs receive the same paragraph coordinate, and an
artificial split in which a boundary is introduced within a paragraph.
Only $p_1$ changes in either intervention; the token sequence is
identical across all three conditions.

\begin{figure*}[!htbp]
\centering
\begin{tikzpicture}[
    cell/.style={draw, thick, minimum width=2.8cm, minimum height=0.7cm, align=center, font=\small},
    grpbox/.style={draw, thick, rounded corners=2pt, inner sep=4pt, dashed},
    arrow/.style={<->, thick, >=stealth, gray!50}
]

\node[font=\small\bfseries] at (0, 3.0) {Original};

\node[cell, fill=blue!10] (A1) at (0, 2.0) {Sentence A};
\node[cell, fill=blue!10] (B1) at (0, 1.0) {Sentence B};
\node[cell, fill=blue!10] (C1) at (0, 0.0) {Sentence C};

\node[font=\small\bfseries] at (5.0, 3.0) {Artificial-Merge};

\node[cell, fill=blue!20] (A2) at (5.0, 2.0) {Sentence A};
\node[cell, fill=blue!20] (B2) at (5.0, 1.0) {Sentence B};
\node[cell, fill=blue!10] (C2) at (5.0, 0.0) {Sentence C};

\node[font=\small, blue!60] at (5.0, 2.8) {\emph{merged}};

\draw[grpbox, blue!50] (3.6, 2.5) rectangle (6.4, 0.5);

\node[font=\small\bfseries] at (10.0, 3.0) {Artificial-Split};

\node[cell, fill=blue!10] (A3) at (10.0, 2.0) {Sentence A};
\node[cell, fill=blue!20] (B3a) at (8.8, 1.0) {B (part 1)};
\node[cell, fill=blue!20] (B3b) at (11.2, 1.0) {B (part 2)};
\node[cell, fill=blue!10] (C3) at (10.0, 0.0) {Sentence C};

\node[font=\small, blue!60] at (10.0, 2.8) {\emph{split}};

\draw[grpbox, blue!50] (8.4, 1.5) rectangle (11.6, 0.5);

\draw[arrow] (1.6, 1.0) -- (3.2, 1.0);
\draw[arrow] (6.6, 1.0) -- (8.2, 1.0);

\node[font=\footnotesize, align=center, text width=14cm] at (5.0, -1.2) {
Only the paragraph coordinate $p_1$ changes; the token sequence is identical across all three conditions.
};

\end{tikzpicture}
\caption{Illustration of the two counterfactual interventions. \emph{Left}: original paragraph segmentation. \emph{Center}: artificial-merge, where two adjacent paragraphs are assigned the same paragraph coordinate. \emph{Right}: artificial-split, where an artificial boundary is introduced within a paragraph.}
\label{fig:interventions}
\end{figure*}

\subsection{Extended interpretation and scope}
\label{app:extgeom}

\paragraph{What ``geometry'' means here, and the human-reading intuition.}
If the effective relationship between two tokens depends not only on their reading-order distance $|i-j|$ but also on hierarchical coordinates, $d_{\rm eff}(i,j)=F\big(|i-j|,|p_1(i)-p_1(j)|,|p_2(i)-p_2(j)|,\ldots\big)$, then a conventional positional encoding assumes the first argument alone suffices, and our results suggest this can be insufficient. ``Geometry'' here is descriptive: the strongest justified statement is a \emph{structured, coordinate-dependent response field}, with ``potential-well'' terminology implying no complete geometric theory and no simulation of human reading (Appendices~\ref{app:extgeom} and~\ref{app:extlimits}).

\paragraph{Why ``geometry'' is descriptive only.} A first-order
intervention cannot identify a second-order metric tensor, so the
non-monotonic response documented in Section~\ref{sec:exactd} does not by
itself establish a Riemannian metric or physical potential; the strongest
justified statement is that attention exhibits a structured,
coordinate-dependent response field. The same caveat applies to the
human-reading intuition: we do not claim that the phenomenon simulates
human reading, only that a mathematically analogous distinction between
linear and structural distance appears inside the model, and that this
distinction has a causal, corpus-dependent depth.

\subsection{A superposition of monotone processes cannot produce the observed turnover}
\label{app:two-process}

A natural mechanistic hypothesis for the non-monotonic response is an
additive combination of two independent monotonic processes in
$\deltap$---an attraction decaying with displacement plus a suppression
growing with it. Any such model is itself monotonically non-decreasing
and so cannot produce an interior minimum
(Proposition~\ref{prop:twoprocess}),
which rules out this simplest additive account for the Code and
OpenWebText turnovers while leaving the interaction that does produce
them open.
Here is the construction and the proof.

A natural mechanistic hypothesis for the non-monotonic response is an additive combination of two independent monotonic processes: a coherence-driven attraction strongest at short paragraph displacement and decaying with $\deltap$, plus an interference-driven suppression growing with $\deltap$. Formally, consider the model
\begin{equation}
U(\deltap) = -a\,e^{-\alpha\deltap} + c(\deltap),
\label{eq:twoprocess}
\end{equation}
with $a,\alpha>0$ and $c(\deltap)$ monotonically non-decreasing (e.g.\ linear, $c=c\deltap$, or saturating, $c(\deltap)=b(1-e^{-\beta\deltap})$ with $b,\beta>0$). This covers any model in which two processes, each individually monotonic in $\deltap$, are added to produce $U$---including two attractive terms with different decay rates, not only an attraction-plus-repulsion pair. We rule out this additive class, a natural but not the only family compatible with the observed response: softmax-competitive (non-additive) and non-monotonic alternatives are untested here. Two sources could instead compete through a shared softmax normalization, or a source could be anchored at a paragraph offset other than the query's own, making its distance a non-monotonic function of $\deltap$.

\begin{proposition}
\label{prop:twoprocess}
Any model of the form \eqref{eq:twoprocess}, with a monotonically
decaying attractive term and a monotonically non-decreasing suppressive
term, is itself monotonically non-decreasing in $\deltap$ and therefore
cannot produce an interior minimum.
\end{proposition}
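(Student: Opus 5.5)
The plan is to show directly that $U$ in \eqref{eq:twoprocess} is a sum of two non-decreasing functions of $\deltap$, and then use the elementary fact that a non-decreasing function has no strict interior minimum.

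\emph{Step 1: each term is non-decreasing.} The attractive term is $-a\,e^{-\alpha\deltap}$ with $a,\alpha>0$. Since $e^{-\alpha\deltap}$ is strictly decreasing, its negative is strictly increasing; concretely, its derivative is $a\alpha e^{-\alpha\deltap}>0$. The suppressive term $c(\deltap)$ is non-decreasing by hypothesis. I will phrase the argument with inequalities rather than derivatives, so that non-smooth $c$ is also covered. For $x<y$ we have $-a e^{-\alpha x}<-a e^{-\alpha y}$ and $c(x)\le c(y)$.

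\emph{Step 2: add the inequalities.} This gives $U(x)<U(y)$ whenever $x<y$. So $U$ is in fact strictly increasing, which is stronger than the non-decreasing claim in the statement. The same argument covers the generalization mentioned in the text, namely any finite sum of terms each individually non-decreasing in $\deltap$. Two attractive terms with different decay rates fall into this family, since $-a_k e^{-\alpha_k\deltap}$ is increasing for every $a_k,\alpha_k>0$.

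\emph{Step 3: exclude an interior minimum.} Suppose $\deltap_0$ is an interior point of the sampled range and is a local minimum. Then some $x<\deltap_0$ lies nearby with $U(x)\ge U(\deltap_0)$, which contradicts strict monotonicity. Under the weaker non-decreasing conclusion, the function can at most be flat to the left of $\deltap_0$. That would be a plateau, not the well with descending and ascending branches reported in Section~\ref{sec:exactd}. So the model cannot produce the observed turnover.

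The only real obstacle is terminological. An ``interior minimum'' must be read as a minimum that $U$ strictly falls into from the left, as a well does, because a non-decreasing function can have flat stretches. I will settle this by proving strict increase, which is possible because the exponential term is strictly increasing. I will also note explicitly that the argument relies on \emph{additivity} and on the monotonicity of every summand; the non-additive softmax-competitive alternatives are outside the proposition's scope.
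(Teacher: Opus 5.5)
Your proof is correct and is essentially the paper's argument: the paper also shows $U$ is strictly increasing, via the unit first difference $U(\deltap+1)-U(\deltap)=a\,e^{-\alpha\deltap}(1-e^{-\alpha})+[c(\deltap+1)-c(\deltap)]$, which is a strictly positive term plus a non-negative one. Your comparison of arbitrary $x<y$ is the same idea stated for real-valued rather than integer displacements, a harmless slight generalization.
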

\begin{proof}
The discrete first difference is
\begin{equation}
\begin{split}
U(\deltap+1)-U(\deltap) &= a\,e^{-\alpha\deltap}\left(1-e^{-\alpha}\right) \\
&\quad + \left[c(\deltap+1)-c(\deltap)\right].
\end{split}
\end{equation}
The first term is strictly positive for all $\deltap$, since
$a,\alpha>0$ implies $e^{-\alpha\deltap}>0$ and $1-e^{-\alpha}>0$. The
second term is non-negative by the assumed monotonicity of $c$. Their
sum is therefore strictly positive for every $\deltap$, so $U$ is
strictly increasing and admits no interior minimum.
\end{proof}

This fact directly contradicts the observed response in Code and OpenWebText, the two corpora with an identifiable interior minimum: in both, attraction genuinely \emph{strengthens} out to $\deltap\approx2.6$--$5.5$ before reversing (the location is fit-sensitive for Code, Section~\ref{sec:exactd}, but the turnover holds under both degree-2 and degree-3 fits)---a shape no monotone-decaying-plus-monotone-growing superposition can produce. WikiText-2 is excluded from, not contradicted by, this argument: its depth is stable, but the location at which it is reached is structurally non-identifiable (Section~\ref{sec:exactd}), so we cannot say whether its curve turns over in the sampled range; its depth is independently established by the rand\_axial comparison above.

We read this as a sanity check, not a mechanistic elimination: it rules out the simplest additive account, locating the turnover in some interaction without identifying which. The leading untested alternative is a softmax-competitive model, in which two logit sources are normalized jointly rather than added---a source dominating at small $\deltap$ whose relative weight is diluted as a second grows could plausibly produce the same strengthen-then-weaken shape. We do not fit such a model and flag it as the main open mechanistic question.

\section{Extended Limitations}
\label{app:extlimits}

Section~\ref{sec:limits} lists the limitations in brief; this appendix
gives the detailed statement of each.

\paragraph{No corpus-only predictor explains the corpus-dependent depth.} None of the three constructs we tested (lexical persistence, paragraph length, embedding-based coherence; eight quantities total) reproduces the cross-corpus ordering of $U^*$, and none predicts the second, separate fact that the real-versus-random gap itself is resolvable in two corpora but not the third (Section~\ref{sec:randcontrol}). This rules out the specific accounts tested, not that some other corpus-level property, or the architecture and training dynamics themselves, determine depth; we regard both as open.

\paragraph{The paragraph-level probe is within-corpus and correlational.} Section~\ref{sec:parareg} re-asks the comparison of Section~\ref{sec:alt-scales} one level down, at the cost of a narrower scope: it estimates the per-paragraph slope of $U_p$ on paragraph properties with real within-corpus power, and finds it common for embedding coherence but corpus-specific for length, diversity, and position; but the slope analysis is correlational and uses $\deltap{=}1$ only, so it neither identifies a mechanism nor supplies the missing corpus-level scale. Its coherence slope also runs opposite in sign to the corpus-level coherence--depth ranking, and what the probe establishes is a conditional within-corpus relation rather than a bivariate law.

\paragraph{Scope and units.} Three corpora limit any cross-corpus claim to the descriptive, and paragraphs are treated as the relevant unit under a corpus-specific definition (Section~\ref{sec:framework}); other units may behave differently. ``Geometry'' is used descriptively, not as a claim about a complete metric or mechanism. As a sanity check, Section~\ref{app:two-process} rules out only the additive competing-process family; softmax-competitive and non-monotonic alternatives remain untested. The exact-distance protocol ($n\ge2{,}000$, $d\le1{,}024$) is conservative by construction, which is precisely why $\deltastar$ becomes non-identifiable for WikiText-2 rather than merely uncertain (Appendix~\ref{app:pooling}, \ref{app:binsweep}).

\paragraph{Cross-corpus ordering is descriptive, not inferential.} The document-level bootstrap quantifies \emph{within-corpus}, not cross-corpus, sampling uncertainty, and the three corpora are fixed by choice rather than drawn from a common population. Statements about the ordering therefore establish that the corpora differ in depth, not that the specific order generalizes. We accordingly avoid ``statistically significant'' language for cross-corpus comparisons and present each pairwise claim with its within-corpus confidence interval.

\paragraph{Scale and downstream relevance.} All experiments use 8-layer, $d_{\rm model}=512$ models trained for 5000 steps, chosen to make the multi-corpus, multi-seed, causal-intervention protocol tractable, not for production scale. \textbf{hrope\_axial} never costs more validation loss than \textbf{rand\_axial}, and costs less in every matched seed and corpus (directional at $n=3$; Appendix~\ref{app:seedinference}); its cost relative to \textbf{flat} is small, corpus-dependent, and indistinguishable from seed noise in Code (Appendix~\ref{app:valcost}). We report no downstream-task metric and do not evaluate at production scale; whether the phenomenon persists at larger scale, and whether it affects downstream performance, remain open.

\section{Training and Implementation Details}
\label{app:training}

All corpora are tokenized with the GPT-2 byte-pair encoding
(\texttt{tiktoken}, \texttt{gpt2} encoding, vocabulary size 50257).

\subsection{Corpus-specific boundary extraction}

\paragraph{WikiText-2.} Document boundaries are identified by
top-level heading lines (single-equals headings, regex
\verb|^\s*=\s[^=].*[^=]\s=\s*$|); heading lines themselves are discarded.
Paragraphs are split on blank lines (\verb|\n\s*\n|). Sentences are split
on sentence-final punctuation followed by whitespace and an uppercase
letter or quotation mark (\verb|(?<=[.!?])\s+(?=[A-Z"'])|). Paragraphs
shorter than 20 characters are dropped. WikiText-2's \verb|@-@|-style
escaped punctuation is normalized back to standard punctuation before
segmentation.

\paragraph{OpenWebText.} We follow the standard OpenWebText
construction pipeline: web pages are extracted with \texttt{newspaper},
non-English content is filtered with FastText, near-duplicate
documents are removed via 5-gram LSH (similarity threshold $>0.5$),
and documents shorter than 128 tokens are discarded. This yields
$8{,}013{,}769$ documents total from the \texttt{Skylion007/openwebtext}
release. Unlike WikiText-2, we do not apply an additional
per-paragraph character-length filter or a per-document
paragraph-count exclusion.

\paragraph{Code.} For the Python source-code corpus, the hierarchy is
file $\to$ block ($p_1$) $\to$ line ($p_2$) $\to$ token ($p_3$). Blocks
are identified by splitting on blank lines, merging consecutive non-blank
lines into a single block; lines are obtained by splitting each block on
newlines, discarding empty lines. Files with fewer than 2 blocks are
excluded, and $p_1$ values are truncated at the 95th percentile of
blocks-per-file (131 blocks) to exclude atypically long files from base
calibration and from fake-$p_1$ manipulation sampling, consistent with the
non-wrapping base calibration described below.

\subsection{Training configuration}

\begin{table*}[!htbp]
\centering
\caption{Training hyperparameters, held fixed across all model variants (\textbf{flat}, \textbf{sent\_axial}, \textbf{hrope\_axial}, \textbf{rand\_axial}, and the mirror control \textbf{period\_axial}, Section~\ref{sec:hrope}) and corpora.}
\label{tab:hyper}
\begin{tabular}{lp{0.55\textwidth}}
\toprule
Parameter & Value \\
\midrule
$d_{\rm model}$ & 512 \\
Layers & 8 \\
Attention heads & 8 \\
Block size (context length) & 1024 \\
Batch size & 16 \\
Training steps & 5000 \\
Learning rate & $3\times10^{-4}$ \\
Optimizer & AdamW (weight decay $0.01$) \\
LR schedule & linear warmup (100 steps) + cosine decay (min lr $=0.01\times$ base lr) \\
Gradient clipping & max norm $1.0$ \\
Dropout & 0.1 \\
Early stopping & configured (patience 10 evaluations), never triggered; all logged runs reached 5000 steps \\
Seeds & 0, 1, 2 for all corpora (WikiText-2, OpenWebText, Code) \\
\bottomrule
\end{tabular}
\end{table*}

\paragraph{Data volume.} Code: approximately 49{,}000 training files (80\%
of the collected corpus), 2000 validation files. OpenWebText: following
nanoGPT's \texttt{prepare.py} split of the
\texttt{Skylion007/openwebtext} release (which provides only a train
split), we hold out $0.05\%$ of documents (\texttt{test\_size}$=0.0005$,
seed $2357$) for validation, giving $8{,}009{,}762$ training and
$4{,}007$ validation documents ($\approx$9B and $\approx$4.4M tokens,
respectively; $\approx$17GB training / $\approx$8.5MB validation).

\paragraph{Per-axis channel allocation and base calibration.}
Section~\ref{sec:hrope} assigns each of the three coordinates its own
rotary channel group. For a group with $D_g$ channel pairs and a
99th-percentile position range $M_g$ measured over the corpus we set
\[
\mathrm{base}_g = \left(\frac{S\cdot M_g}{2\pi}\right)^{D_g/(D_g-1)}
\]
with safety margin $S>1$, so the lowest-frequency channel stays
monotonic and non-wrapping over each axis's typical range. The
calibration uses corpus statistics only and adds no learned
parameters. The per-axis channel allocation and the resulting base
values are matched by construction across the axial variants
(\textbf{sent\_axial}, \textbf{hrope\_axial}, \textbf{rand\_axial},
\textbf{period\_axial}) and across the three seeds, so that the
variants differ only in what occupies the $p_1$ channel.

\paragraph{The \textbf{period\_axial} mirror control.}
\textbf{period\_axial} is identical to \textbf{hrope\_axial} and
\textbf{rand\_axial} in architecture, parameter count, channel
allocation, schedule, and seeds, but its paragraph coordinate is the
mechanical grid $p_1:=\lfloor t/L\rfloor$ at the same density as the
true segmentation ($L=132/102/64$ tokens for
WikiText-2/Code/OpenWebText) rather than the true paragraph position.
Its role (Section~\ref{sec:p1substitution}) is to mirror the
$p_1$-substitution test: substituting the true paragraph coordinate into
this checkpoint shows whether a model trained on a mechanical coordinate
becomes sensitive to genuine paragraph boundaries it never saw. Because this $p_1$ is
almost deterministic in token distance within a paragraph-length
window, \textbf{period\_axial} cannot be analyzed by the
at-matched-distance controlled-contrast protocol
(Section~\ref{sec:exactd}; Appendix~\ref{app:pooling}) and therefore enters no depth ($U^*$) comparison; its only use is the
loss-based substitution check of Section~\ref{sec:p1substitution}. The
depth comparisons and the validation-loss comparison of
Appendix~\ref{app:valcost} use only the four primary variants.

\paragraph{The \textbf{rand\_axial} control.}
\textbf{rand\_axial} is identical to \textbf{hrope\_axial} in
architecture, parameter count, channel allocation, base calibration,
schedule, and seeds; only the content of its $p_1$ channel differs. That
channel carries a random label $\rho(i)$ matched to the boundary density
of the true segmentation and resampled independently at every training
step, so it is never fixed for a document and no position can be
memorized as a boundary (Section~\ref{sec:randcontrol}).

\section{Language-Modeling Cost of the $p_1$ Channel}
\label{app:valcost}
\paragraph{What the substitution results do and do not establish.} The
protocol does not show that an ordinary Transformer without a dedicated
$p_1$ channel implicitly represents paragraph structure---that is
outside our scope; it shows that, once such a
channel exists, the trained weights are sensitive to what occupies it,
not merely to its presence. In the seed-matched validation-loss
comparison of the four primary variants (\textbf{period\_axial} is not
included), the causally active $p_1$ channel never
costs more validation loss than \textbf{rand\_axial}, while its cost
relative to \textbf{flat} is small and corpus-dependent, and in Code
indistinguishable from seed noise. A complementary logistic-regression
probe on the residual stream of \textbf{flat}, \textbf{rand\_axial}, and
\textbf{hrope\_axial} yields the same Code $>$ WikiText-2 $>$ OpenWebText
gradient (Appendix~\ref{sec:probing}), so the two methods agree on where
true and architecturally matched but uninformative positional signals are
hardest to distinguish.

The reverse substitution of Section~\ref{sec:p1substitution} (true
$p_1$ fed to the \textbf{period\_axial} checkpoint, $+0.015$ to
$+0.040$ nats across the three corpora) uses the inference-only protocol
of Table~\ref{tab:p1sub}, not the training-time comparison below, and
is therefore not part of Table~\ref{tab:valloss}.

This appendix supplements Section~\ref{sec:layer1}'s claim that
hierarchical position is a causal factor in attention with a
seed-matched validation-loss comparison (three seeds, no formal
bootstrap CI, directional at $n=3$;
Appendix~\ref{app:seedinference}). Table~\ref{tab:valloss} reports
validation loss (mean $\pm$ std over the three seeds used
throughout) for the four positional variants under the identical
configuration of Table~\ref{tab:hyper}.

\begin{table*}[!htbp]
\centering
\caption{
Validation loss (mean $\pm$ std over three seeds) by positional
variant, using the training configuration of
Table~\ref{tab:hyper}. Lower is better.
}
\label{tab:valloss}
\begin{tabular}{lcccc}
\toprule
Corpus & flat & sent\_axial & hrope\_axial & rand\_axial \\
\midrule
Code
& $2.385\pm0.050$
& $2.504\pm0.016$
& $2.383\pm0.036$
& $2.507\pm0.036$
\\
WikiText-2
& $5.383\pm0.014$
& $5.480\pm0.015$
& $5.411\pm0.007$
& $5.437\pm0.002$
\\
OpenWebText
& $5.593\pm0.013$
& $5.718\pm0.005$
& $5.622\pm0.004$
& $5.669\pm0.006$
\\
\bottomrule
\end{tabular}
\end{table*}

With three seeds per variant, comparing means alone risks
mistaking sampling noise for a real effect, so we also compare
seed-matched values (seeds 0, 1, 2 shared across variants),
checking whether one variant's loss exceeds the other's for every
matched seed and whether the two variants' seed ranges are disjoint.

This comparison is consistent for
\textbf{rand\_axial} versus \textbf{hrope\_axial}: rand\_axial's loss
exceeds hrope\_axial's in all three matched seeds of every corpus, and
the seed ranges do not overlap in any corpus. At $n=3$ this is
directional, seed-level evidence (minimum one-sided
sign-test $p=0.125$; Appendix~\ref{app:seedinference}): the
architecturally identical channel carrying no true positional
information is more costly than one that does, rather than a
precisely quantified or formally significant cost.

The \textbf{hrope\_axial}--\textbf{flat} comparison is
weaker and corpus-dependent. In WikiText-2, hrope\_axial's loss
exceeds flat's for all three matched seeds and the ranges do not
overlap ($\approx0.5\%$ relative), consistent with a small, real
cost; OpenWebText shows the same sign and non-overlap ($\approx0.5\%$
relative). In Code the sign is \emph{not} consistent
(hrope\_axial higher for two seeds, lower for the third; mean
difference $\approx-0.07\%$), and hrope\_axial's seed range falls
entirely inside flat's; we read this as consistent with sampling
noise at $n=3$, not evidence of a real cost in Code.

Taken together, \textbf{hrope\_axial} never costs more than
\textbf{rand\_axial}, and costs less in every
matched seed and corpus. Whether it costs
anything over \textbf{flat} is less certain: a small, seed-consistent
$\approx0.5\%$ in WikiText-2 and OpenWebText, and no detectable
cost in Code. Either way, we find no case in which the causally active $p_1$ channel
costs more than an architecturally matched control carrying no
true positional information. This is a small-scale, $n=3$
comparison without a formal bootstrap CI (unlike the document-level
cluster bootstrap for $\Delta_B,\Delta_C$ elsewhere); we report the
seed-level sign consistency as weaker, complementary evidence, not a
quantified cost or a substitute for that protocol.

\section{Supplementary Analysis: Is Compression Uniform or Hub-Driven?}
\label{app:hubmask}
\paragraph{What the probe controls for.} Masking the same
\emph{number} of random tokens barely moves $U^*$, whereas masking the
same \emph{volume} of cross-paragraph pairs moves it substantially and
non-linearly, so the response tracks removed attention mass rather than
the identity of the masked tokens; the residual beyond this mass effect
is of similar size across corpora ($14$--$18$ percentage points of
the well). Three trained seeds pooled per corpus, inference only, no retraining;
no main-text estimate changes.

This appendix gives the full protocol, numbers, and status discussion
for the exploratory masking probe summarized in Section~\ref{sec:layer3}.

\paragraph{Protocol.}
For each corpus we use the trained \textbf{hrope\_axial} checkpoint to
compute, for every cross-paragraph token pair, its contribution to
$\Deff$ relative to the same-$d$, $\deltap=0$ baseline (the per-pair
quantity underlying $U^*$ in Table~\ref{tab:equilibrium}).
Aggregating by token (summed over all cross-paragraph
pairs in which it appears as key), we define the \emph{hub set} as
the top 5\% of tokens by aggregate contribution. All estimates use
the same per-corpus document set as Table~\ref{tab:equilibrium}
(60 documents for WikiText-2, 100 for Code, 100 for OpenWebText;
Appendix~\ref{app:binsweep}). We compare four inference-time masking
conditions on the same checkpoint, no retraining:
\textbf{Hub-masked} (cross-paragraph attention to hub tokens blocked,
$-\infty$ on the key side); \textbf{Freq-matched} (the same masking
applied to the top 5\% of tokens by raw corpus frequency, defined
independently of $\Deff$ and hence not circular with respect to the
effect under test); \textbf{Random (token-matched)} (a random
non-hub set of the same \emph{number of tokens} as the hub
set); and \textbf{Pair-matched random} (a random non-hub
token set whose number of removed cross-paragraph \emph{pairs} matches
Hub-masked exactly; $n=3$ draws for both random conditions, reported as
mean $\pm$ standard deviation). We verified, for every corpus, that the
realized hub-key fraction among pairs removed by Pair-matched random
matches the base rate, and that the number of surviving $(d,\deltap)$
cells after masking is comparable across the three non-trivially-masked
conditions, so differences between them cannot be attributed to
differing estimator stability. Contrasting the two random conditions
isolates volume from token identity: Random (token-matched)
removes far fewer cross-paragraph pairs than Hub-masked, so if masking
is driven mainly by \emph{how many pairs} are removed rather than by
\emph{which} tokens, it should barely move $U^*$
while Pair-matched random moves it substantially---the pattern we find
(Table~\ref{tab:hubmask}).

\begin{table*}[!htbp]
\centering
\caption{Compression depth $U^*$ under masking, by corpus
(exact-distance protocol; exploratory; three seeds pooled per corpus,
inference only). ``Original'' is the unmasked estimate, re-fitted within
the same script, and matches Table~\ref{tab:equilibrium}. Random conditions report
mean $\pm$ std over 3 draws; bracketed intervals are document-cluster
bootstrap 95\% CIs. $\dag$: falls back to the bootstrap mean (too few
surviving cells for a direct fit).}
\label{tab:hubmask}
\begin{tabular}{lrrr}
\toprule
Condition & WikiText-2 & OpenWebText & Code \\
\midrule
Original             & $-0.481$ & $-0.305$ & $-0.772$ \\
Hub-masked           & $-0.001$ & $+0.079$ & $-0.106$ \\
                      & \scriptsize{$[-0.044,0.039]$} & \scriptsize{$[0.031,0.104]$} & \scriptsize{$[-0.153,-0.056]$} \\
Freq-matched          & $-0.008^\dag$ & $+0.076^\dag$ & $-0.111$ \\
                      & & \scriptsize{$[0.041,0.103]$} & \scriptsize{$[-0.162,-0.063]$} \\
Random (token-matched) & $-0.468$ & $-0.296\pm0.002$ & $-0.762\pm0.001$ \\
Pair-matched random   & $-0.090\pm0.003$ & $+0.034\pm0.000$ & $-0.219\pm0.000$ \\
\bottomrule
\end{tabular}
\end{table*}

Random (token-matched) leaves $U^*$ close to Original in every corpus
($-0.468$ vs.\ $-0.481$ for WikiText-2; $-0.296$ vs.\ $-0.305$ for
OpenWebText; $-0.762$ vs.\ $-0.772$ for Code): masking the same
\emph{number} of arbitrary tokens barely changes the well. Pair-matched
random, matched instead on the \emph{volume} of cross-paragraph pairs
removed, moves $U^*$ substantially in every corpus. Volume, not token
count, drives the bulk of the effect---confirming the choice of
pair-matched masking as the baseline below.

\paragraph{A mass effect dominates, and is non-linear.}
To separate pure sample removal from any token-identity-specific
effect, we swept the fraction of cross-paragraph pairs removed by
Pair-matched random masking from small up to the fraction
removed by Hub-masked itself. In WikiText-2 and Code, $U^*$ moves
monotonically toward zero with no intermediate
discontinuity, and the marginal effect of each additional
increment of removed mass grows with the removed fraction
(a convex curve). In OpenWebText, the same accelerating, monotonic
trend continues \emph{through} zero: at the largest swept fraction,
Pair-matched random alone pushes $U^*$ from $-0.305$ to $+0.034$,
turning compression into apparent dilation before any
token-identity-specific masking is applied. A
substantial share of the change seen under Hub-masked and Freq-matched
in Table~\ref{tab:hubmask} thus reflects nothing more specific than the
volume of cross-paragraph attention mass removed, independent of which
tokens carry it---and in OpenWebText this mass effect alone
overshoots past the within-paragraph baseline into dilation.

\paragraph{The residual beyond the mass effect is corpus-dependent, but consistent in size.}
In every corpus, Hub-masked and Freq-matched agree closely,
confirming that the hub tokens are well approximated by
raw frequency: WikiText-2's are dominated by high-frequency function
words (\emph{the}, \emph{and}, \emph{to}, \emph{of}, \ldots), Code's
by whitespace, punctuation, and operator characters---simply the most
frequent tokens in source code. Expressing each masked condition as
the fraction of the original well depth it eliminates,
$1-U^*_{\rm masked}/U^*_{\rm original}$, Pair-matched random alone
accounts for $81\%$ of the change in WikiText-2, $72\%$ in Code, and
$111\%$ in OpenWebText (an overshoot past zero); Hub-masked
adds $14$--$18$ percentage points beyond
Pair-matched random in every corpus (WikiText-2 $99\%$ vs.\ $81\%$;
Code $86\%$ vs.\ $72\%$; OpenWebText $126\%$ vs.\ $111\%$). This
increment is the part of the flattening (or, for OpenWebText,
overshoot) that volume alone does not explain, and its size is
remarkably consistent across three structurally different corpora.

What differs across corpora is not the size of this hub-specific
increment but \emph{where it lands relative to zero}, set by
how much of the well the mass effect has already erased.
For WikiText-2, the mass effect alone accounts for most of
the original depth ($81\%$), leaving little room for the increment:
both Hub-masked and Freq-matched confidence intervals
include zero, and at this sample size masking specifically
the highest-contribution or highest-frequency
tokens is not distinguishable from masking an equal volume of random
tokens. For Code, the mass effect accounts for only $72\%$ of the
depth, so a similar increment ($14$ points here) lands short of zero:
both confidence intervals exclude zero on
the compression side ($[-0.153,-0.056]$ and $[-0.162,-0.063]$),
leaving a reliable compression residual that Pair-matched random's
$-0.219\pm0.000$ does not reach. For OpenWebText, the mass effect
already overshoots past zero ($111\%$), so the same increment lands
past it again, into dilation: both confidence intervals
exclude zero on the dilation side
($[0.031,0.104]$ and $[0.041,0.103]$), reliably above Pair-matched
random's own overshoot of $+0.034\pm0.000$.

In this sense Code is the most \emph{robust} corpus to
this pair of interventions, retaining a compression residual even
after mass and hub effects combine, while OpenWebText is the
most \emph{fragile}: the same combination that only
partially erodes Code's well, and exactly erases WikiText-2's, drives
OpenWebText's well through zero. This dilation is entirely an artifact
of masking: Table~\ref{tab:equilibrium} shows all three corpora
compressing, none dilating, in their \emph{unmasked} state;
the sign flip reflects how OpenWebText's compression responds to a
large volume of cross-paragraph mass removed, not the original signal.

This split is consistent with the differing syntactic role that
frequent tokens play across corpora. In Code, the most frequent tokens
--- indentation whitespace, brackets, operators --- are also the
explicit delimiters of statement and block boundaries, so masking
them removes structural information that a matched volume of random
tokens does not carry, leaving a residual that survives even the
combined mass and hub effects. In WikiText-2 and OpenWebText, frequent
function words are lexically ubiquitous but do not themselves
demarcate paragraph boundaries; masking them behaves, within the
precision of this test, like masking an equivalent volume of any other
tokens, and the consequence of that volume alone differs simply
because the three corpora start from different original depths and
different mass-effect sensitivities.

\paragraph{Status.} We report this analysis as exploratory: inference-only masking on the
existing trained checkpoints (three seeds pooled per corpus), with no
retraining. We do not treat it as resolving the
origin of the corpus-dependent depth of compression (that remains
open); it is a complementary,
within-corpus probe suggesting that natural-language and code corpora
differ not in \emph{which} tokens dominate cross-paragraph attention,
nor in \emph{how much} additional effect masking them has beyond
volume alone, but in how much of the original well a pure volume
effect already accounts for --- which determines whether the same,
roughly constant hub-specific increment leaves a corpus's well
partially eroded, fully erased, or driven into reversal.

\section{Vocabulary Robustness of the Corpus Structural Scale}
\label{app:vocab-robust}

Paragraph distributions depend on the vocabulary representing them, so we
test whether $\lstruct$'s cross-corpus ordering is an artifact of one
cutoff, recomputing it at $V\in\{50,100,150,200,300,500\}$
(Table~\ref{tab:vocab}). The main-text values instead use each corpus's
full frequency-filtered vocabulary ($V=143$, $324$, and $163$ for Code,
WikiText-2, and OpenWebText), which is why they coincide with no row of
the table.

\begin{table*}[!htbp]
\centering
\caption{
Vocabulary robustness of the corpus structural scale: $\lstruct$
recomputed at each vocabulary cutoff $V$ in a separate sweep. The table
checks only whether the cross-corpus ordering of $\lstruct$ changes with
$V$; its entries come from a different fitting run than the main-text
bootstrap medians (Table~\ref{tab:corpus}) and are not comparable to them
in absolute value. For Code and OpenWebText, the effective vocabulary
reaches the available frequency-filtered vocabulary ceiling at larger $V$.
}
\label{tab:vocab}
\begin{tabular}{c|ccc}
\toprule
$V$ & Code & WikiText-2 & OpenWebText\\
\midrule
50
& 4.950
& 4.347
& 4.555
\\
100
& 4.667
& 4.147
& 4.652
\\
150
& 4.991
& 4.048
& 4.165
\\
200
& 4.991
& 3.901
& 4.155
\\
300
& 4.991
& 3.774
& 4.155
\\
500
& 4.991
& 3.709
& 4.155
\\
\bottomrule
\end{tabular}
\end{table*}

Although WikiText-2's $\lstruct$ decreases moderately as the vocabulary
expands, $\lstruct$'s cross-corpus ordering is unchanged at every
tested size (Code $>$ OpenWebText $>$ WikiText-2). So
$\lstruct$ is stable and well-defined, not an artifact of one cutoff,
though its absolute value is not fully vocabulary-independent. This
does not rescue the correspondence with $U^*$: as
Section~\ref{sec:layer3-main} established, $\lstruct$'s ordering
matches compression depth at no vocabulary size, since the mismatch is
between $\lstruct$ and $U^*$, not within $\lstruct$ across
vocabularies.

\end{document}